\newcommand{\bt}{\bm{t}}
\newcommand{\bx}{\bm{x}}
\newcommand{\by}{\bm{y}}
\newcommand{\cS}{{\mathcal{S}}}
\newcommand{\RR}{\mathbb{R}}
\newcommand{\bxi}{\bm{\xi}}
\DeclareMathOperator{\Var}{{\rm Var}}
\def\##1\#{\begin{align}#1\end{align}}
\def\$#1\${\begin{align*}#1\end{align*}}
\def\T{\intercal} %%%transpose operator
\newcommand{\red}[1]{\textcolor{red}{#1}}
\newcommand{\Rom}[1]{\text{\uppercase\expandafter{\romannumeral #1\relax}}}
\newcommand{\independent}{\mathrel{\text{{$\perp\mkern-10mu\perp$}}}}
\newtheorem{theorem}{Theorem}
\newtheorem{lemma}{Lemma}
\newtheorem{assumption}{Assumption}
\theoremstyle{definition}
\def\X{{\mathbf{X}}}
\def\Y{{\mathbf{Y}}}
\def\x{{\bm{x}}}
\def\y{{\bm{y}}}
\def\S{{\bm\Sigma}}
\def\bt{{\bm{\theta}}}
\newcommand*{\var}{\textnormal{var}}
\def\##1\#{\begin{align}#1\end{align}}
\def\$#1\${\begin{align*}#1\end{align*}}
\def\T{\intercal} %%%transpose operator
\title{Large-scale optimal transport map estimation using projection pursuit}
\author{
  \textbf{Cheng Meng$^1$ \   Yuan Ke$^1$ \  Jingyi Zhang$^1$ \  Mengrui Zhang$^1$ \  Wenxuan Zhong$^1$ \  Ping Ma$^1$}\\
  {$^1$Department of Statistics, University of Georgia \\
  \{cheng.meng25, yuan.ke, jingyi.zhang25, mengrui.zhang, wenxuan, pingma \}@uga.edu
  }
}
\begin{document}

\maketitle

\begin{abstract}
% \iffalse
% We provide a statistical approach for estimating the large-scale empirical optimal transport map (OTM) from one continuous probability distribution to another.
% The calculation of the exact OTM usually suffers from the curse of dimensionality.
% Existing OTM estimation methods overcome such a problem through random projection, thus breaking down the high-dimensional OTM estimation problem into a series of one-dimensional OTM estimation problems.
% These projection-based methods, however, have a low convergence rate in practice, since randomly selected projection directions are not ideal.  
% Motivated by projection pursuit regression, we adapt the projection-based OTM estimation methods in that we select the most ``informative'' projection direction in each iteration.
% Using techniques of sufficient dimension reduction, \red{we show the proposed algorithm enjoys a faster convergence rate than existing projection-based OTM estimation methods.}
% We explore our findings empirically and observe the benefits of the proposed algorithm through the applications of Wasserstein distance estimation and the generative model.
% \fi

This paper studies the estimation of large-scale optimal transport maps (OTM), which is a well known challenging problem owing to the curse of dimensionality.
Existing literature approximates the large-scale OTM by a series of one-dimensional OTM problems through iterative random projection.
Such methods, however, suffer from slow or none convergence in practice due to the nature of randomly selected projection directions. 
Instead, we propose an estimation method of large-scale OTM by combining the idea of projection pursuit regression and sufficient dimension reduction. 
The proposed method, named projection pursuit Monge map (PPMM), adaptively selects the most ``informative'' projection direction in each iteration. 
We theoretically show the proposed dimension reduction method can consistently estimate the most ``informative'' projection direction in each iteration. 
Furthermore, the PPMM algorithm weakly convergences to the target large-scale OTM in a reasonable number of steps. 
Empirically, PPMM is computationally easy and converges fast. 
We assess its finite sample performance through the applications of Wasserstein distance estimation and generative models.

\end{abstract}

% %%%%%%%%%%%%%%%%%%%%%%%%%%%%%%%%%%%%%%%%%%%%%%%%%%%
% % Section 1
% %%%%%%%%%%%%%%%%%%%%%%%%%%%%%%%%%%%%%%%%%%%%%%%%%%%
\section{Introduction}
Recently, optimal transport map (OTM) draws great attention in machine learning, statistics, and computer science due to its close relationship to generative models, including generative adversarial nets \citep{goodfellow2014generative}, the ``decoder'' network in variational autoencoders \citep{kingma2013auto}, among others.
In a generative model, the goal is usually to generate a ``fake'' sample, which is indistinguishable from the genuine one.
This is equivalent to find a transport map $\phi$ from random noises with distribution $p_X$ (e.g., Gaussian distribution or uniform distribution) to the underlying population distribution $p_Y$ of the genuine sample, e.g., the MNIST or the ImageNet dataset. 
Nowadays, generative models have been widely-used for generating realistic images \citep{dosovitskiy2016generating,liu2017auto}, songs \citep{blaauw2016modeling,engel2017neural} and videos \citep{liang2017dual,vondrick2016generating}.
Besides generative models, OTM also plays essential roles in various machine learning applications, say color transfer \citep{ferradans2014regularized,rabin2014adaptive}, shape match \citep{su2015optimal}, transfer learning \citep{courty2017optimal,peyre2019computational} and natural language processing \citep{peyre2019computational}.

Despite its impressive performance, the computation of OTM is challenging for a large-scale sample with massive sample size and/or high dimensionality. 
Traditional methods for estimating the OTM includes finding a parametric map and using ordinary differential equations \citep{brenier1997homogenized,benamou2002monge}.
To address the computational concern, recent developments of OTM estimation have been made based on solving linear programs \citep{rubner1997earth,pele2009fast}.
Let $\{\x_i\}_{i=1}^n\in\mathbb{R}^d$ and $\{\y_i\}_{i=1}^n\in\mathbb{R}^d$ be two samples from two continuous probability distributions functions $p_X$ and $p_Y$, respectively.
Estimating the OTM from $p_X$ to $p_Y$ by solving a linear program requiring $O(n^3\log(n))$ computational time for fixed $d$ \citep{peyre2019computational,seguy2017large}.
To alleviate the computational burden, some literature \citep{cuturi2013sinkhorn,genevay2016stochastic,arjovsky2017wasserstein,gulrajani2017improved} pursued fast computation approaches of the OTM objective, i.e., the Wasserstein distance.
%These methods, however, are not able to provide the explicit form of the OTM.
Another school of methods aims to estimate the OTM efficiently when $d$ is small, including multi-scale approaches \citep{merigot2011multiscale,gerber2017multiscale} and dynamic formulations \citep{solomon2014earth,papadakis2014optimal}.
These methods utilize the space discretization, thus are generally not applicable in high-dimensional cases. 

The random projection method (or known as the radon transformation method) is proposed to estimate OTMs efficiently when $d$ is large \cite{pitie2005n,pitie2007automated}. 
Such a method tackles the problem of estimating a $d$-dimensional OTM iteratively by breaking down the problem into a series of subproblems, each of which finds a one-dimensional OTM using projected samples.
Denote $\mathbb{S}^{d-1}$ as the $d$-dimensional unit sphere.
In each iteration, a random direction $\bt\in\mathbb{S}^{d-1}$ is picked, and the one-dimensional OTM is then calculated between the projected samples $\{\x_i^\T\bt\}_{i=1}^n$ and $\{\y_i^\T\bt\}_{i=1}^n$. 
The collection of all the one-dimensional maps serves as the final estimate of the target OTM.
The sliced method modifies the random projection method by considering a large set of random directions from $\mathbb{S}^{d-1}$ in each iteration \citep{bonneel2015sliced,rabin2011wasserstein}.
The ``mean map'' of the one-dimensional OTMs over these random directions is considered as a component of the final estimate of the target OTM. 
We call the random projection method, the sliced method, and their variants as the \textit{projection-based approach}.
Such an approach reduces the computational cost of calculating an OTM from $O(n^3\log(n))$ to $O(Kn\log(n))$, where $K$ is the number of iterations until convergence.
However, there is no theoretical guideline on the order of $K$.
In addition, the existing projection-based approaches usually require a large number of iterations to convergence or even fail to converge.
We speculate that the slow convergence is because a randomly selected projection direction may not be ``informative'', leading to a one-dimensional OTM that failed to be a decent representation of the target OTM. We illustrate such a phenomenon through an illustrative example as follows.

\begin{wrapfigure}[12]{r}{0.65\textwidth}
\vspace{-0.5cm}
\includegraphics[width=0.65\textwidth]{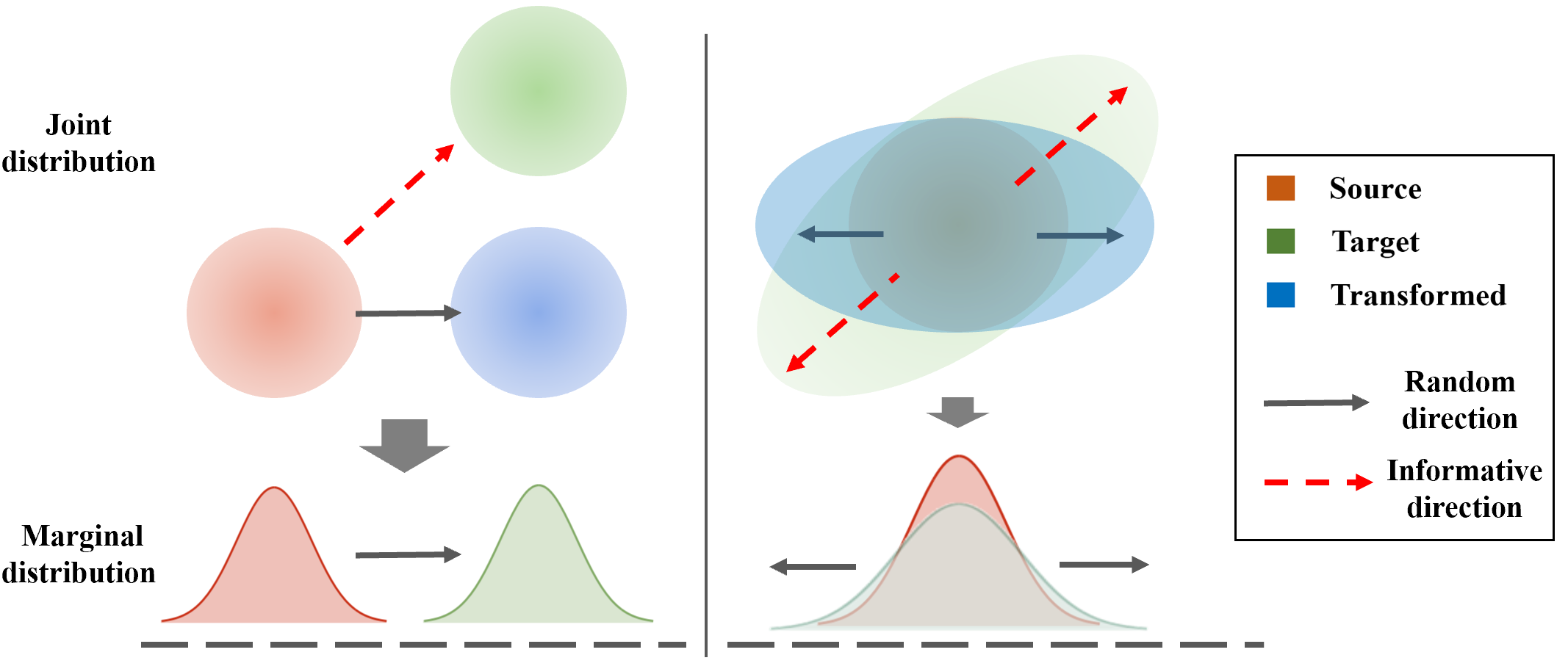}
 \vspace{-0.5cm}
  \caption{Illustration for the ``informative'' projection direction
  } 
\end{wrapfigure}

\textbf{An illustrative example.}  The left and right panels in Figure 1 illustrates the importance of choosing the ``informative'' projection direction in OTM estimation.
The goal is to obtain the OTM $\phi^*$ which maps a source distribution $p_X$ (colored in red) to a target distribution $p_Y$ (colored in green).
For each panel, we first randomly pick a projection direction (black arrow) and obtain the marginal distributions of $p_X$ and $p_Y$ (the bell-shaped curves), respectively. 
The one-dimensional OTM then can be calculated based on the marginal distributions.
Applying such a map to the source distribution yields the transformed distribution (colored in blue). One can observe that the transformed distributions are significantly different from the target ones.
Such an observation indicates that the one-dimensional OTM with respect to a random projection direction may fail to well-represent the target OTM.
This observation motivates us to select the ``informative'' projection direction (red arrow), which yields a better one-dimensional OTM.

\textbf{Our contributions.}
To address the issues mentioned above, this paper introduces a novel statistical approach to estimate large-scale OTMs. The proposed method, named projection pursuit Monge map (PPMM), improves the existing projection-based approaches from two aspects. First, PPMM uses a sufficient dimension reduction technique to estimate the most ``informative'' projection direction in each iteration. Second, PPMM is based on projection pursuit \citep{friedman1981projection}. The idea is similar to boosting that search for the next optimal direction based on the residual of previous ones. 
Theoretically, we show the proposed method can consistently estimate the most ``informative'' projection direction in each iteration, and the algorithm weakly convergences to the target large-scale OTM in a reasonable number of steps.
The finite sample performance of the proposed algorithm is evaluated by two applications: Wasserstein distance estimation and generative model. 
We show the proposed method outperforms several state-of-the-art large-scale OTM estimation methods through extensive experiments on various synthetic and real-world datasets.

%The rest of the paper is organized as follows. The main algorithm is introduced in Section \ref{sec:main}. The theoretical analysis of the proposed method is presented in Section 3. The results of numerical experiments are reported in Section 4.

% %%%%%%%%%%%%%%%%%%%%%%%%%%%%%%%%%%%%%%%%%%%%%%%%%%%
% % Section 2
% %%%%%%%%%%%%%%%%%%%%%%%%%%%%%%%%%%%%%%%%%%%%%%%%%%%
\section{Problem setup and methodology}\label{sec:main}
\textbf{Optimal transport map and Wasserstein distance.}
Denote $X\in \RR^d$ and $Y \in \RR^d$ as two continuous random variables with probability distribution functions $p_X$ and $p_Y$, respectively.
The problem of finding a transport map $\phi: \RR^d \rightarrow \RR^d$ such that $\phi(X)$ and $Y$ have the same distribution, has been widely-studied in mathematics, probability, and economics, see \citep{ferradans2014regularized, su2015optimal,reich2013nonparametric} for examples of some new developments.
Note that the transport map between the two distributions is not unique. Among all transport maps, it may be of interest to define the ``optimal'' one according to some criteria. 
A standard approach, named Monge formulation \citep{villani2008optimal}, is to find the OTM\footnote{Such a map is thus also called the Monge map.} $\phi^*$ that satisfies
\begin{eqnarray*}
\phi^* =\inf_{\phi \in \Phi} \int_{\RR^d} \|X- \phi(X) \|^p \mbox{d}p_X,
\end{eqnarray*}
where $\Phi$ is the set of all transport maps, $\|\cdot\|$ is the vector norm and $p$ is a positive integer. %\footnote{For example, the $L_2$ cost $C\left(\phi(\x), \x\right)=\|\phi(\x)-\x\|^2$.}
Given the existence of the Monge map, the Wasserstein distance of order $p$ is defined as
\begin{eqnarray*}
W_p(p_X, p_Y)=\left(\int_{\RR^d} \|X- \phi^*(X) \|^p \mbox{d}p_X \right)^{1/p}.
\end{eqnarray*}
% \begin{eqnarray*}
% W_p(p_X, p_Y)= \left(\inf_{J \in \cJ (X,Y)} \int_{\RR^d \times \RR^d} \|X - Y \|^p \mbox{d} J(X,Y) \right)^{1/p}=\left(\int_{\RR^d} \|X- \phi^*(X) \|^p \mbox{d}p_X \right)^{1/p},
% \end{eqnarray*}
% where $\cJ (X,Y)$ contains all joint distributions $J$ for $(X, Y)$ that have marginals $p_X$ and $p_Y$.
Denote $\widehat{\phi}$ as an estimator of $\phi^*$.
Suppose one observe $\X=(\x_1, \ldots, \x_n)^{\T}\in\mathbb{R}^{n\times d}$ and $\Y=(\y_1, \ldots , \y_n)^{\T} \in\mathbb{R}^{n \times d}$ from $p_X$ and $p_Y$, respectively. 
The Wasserstein distance $W_p(p_X, p_Y)$ thus can be estimated by
\begin{eqnarray*}
\widehat{W}_p(\X, \Y)= \left(\frac{1}{n} \sum\limits_{i=1}^n \|\x_i- \widehat{\phi}(\x_i) \|^p  \right)^{1/p}.
\end{eqnarray*}

\textbf{Projection pursuit method.}
Projection pursuit regression \citep{friedman1981projection,huber1985projection,friedman1987exploratory,ifarraguerri2000unsupervised} is  widely-used for high-dimensional nonparametric regression models which takes the form. 
\begin{eqnarray}\label{model_PPR}
z_i=\sum_{j=1}^sf_j(\bm{\beta}_j^\T\bm{x}_i)+\epsilon_i,\quad i=1,\ldots,n,
\end{eqnarray}
where $s$ is a hyper-parameter, $\{z_i\}_{i=1}^n\in\mathbb{R}$ is the univariate response, $\{\bm{x}_i\}_{i=1}^n\in\mathbb{R}^d$ are covariates, and $\{\epsilon_i\}_{i=1}^n$ are i.i.d. normal errors. 
The goal is to estimate the unknown link functions $\{f_j\}_{j=1}^s:\mathbb{R}\rightarrow\mathbb{R}$ and the unknown coefficients $\{\bm{\beta}_j\}_{j=1}^s\in\mathbb{R}^d$.

The additive model \eqref{model_PPR} can be fitted in an iterative fashion.
In the $k$th iteration, $k=2,\ldots,s$, denote $\{(\widehat{f}_j,\widehat{\bm{\beta}}_j)\}_{j=1}^{k-1}$ the estimate of $\{(f_j,\bm{\beta}_j)\}_{j=1}^{k-1}$ obtained from previous $k-1$ iterations. 
Denote $R_i^{[k]}=z_i-\sum_{j=1}^{k-1}\widehat{f}_j(\widehat{\bm{\beta}}_j^\T\bm{x}_i)$, $i=1,\ldots,n$, the residuals. Then $(f_k,\bm{\beta}_k)$ can be estimated by solving the following least squares problem
\begin{eqnarray*}%\label{lsp}
\underset{f_k,\bm{\beta}_k}{\min} \sum_{i=1}^n\left[R_i^{[k]}-f_k(\bm{\beta}_k^\T\bm{x}_i)\right]^2.
\end{eqnarray*}
The above iterative process explains the intuition behind the projection pursuit regression. Given the model fitted in previous iterations, we fit a one dimensional regression model using the current residuals, rather than the original responses. We then add this new regression model into the fitted function in order to update the residuals. By adding small regression models to the residuals, we gradually improve fitted model in areas where it does not perform well.

The intuition of projection pursuit regression motivates us to modify the existing projection-based OTM estimation approaches from two aspects. First, in the $k$th iteration, we propose to seek a new projection direction for the one-dimensional OTM in the subspace spanned by the residuals of the previously $k-1$ directions. On the contrary, following a direction that is in the span of used ones can lead to an inefficient one dimensional OTM. 
As a result, this ``move'' may hardly reduce the Wasserstein distance between $p_X$ and $p_Y$. Such inefficient ``moves'' can be one of the causes of the convergence issue in existing projection-based OTM estimation algorithms. 
Second, in each iteration, we propose to select the most ``informative'' direction with respect to the current residuals rather than a random one. Specifically, we choose the direction that explains the highest proportion of variations in the subspace spanned by the current residuals. Intuitively, this direction addresses the maximum marginal ``discrepancy'' between $p_X$ and $p_Y$ among the ones that are not considered by previous iterations. We propose to estimate this most ``informative'' direction with sufficient dimension reduction techniques introduced as follows.

\iffalse
Notice that the projection pursuit regression is analogous to the projection-based approach in the OTM estimation problem. 
Both of them alleviate the curse of dimensionality by breaking down the problem of solving a high-dimensional problem into a series of one-dimensional sub-problems.
Despite the similarity, the critical difference between these two, however, is that they utilize different strategies to pick the projection direction.
The projection-based approach picks the projection direction randomly, while the projection pursuit regression selects the most ``informative'' one in each iteration.
Such an observation motivates us to select the most ``informative'' projection direction when estimating the OTM.

The only crucial question left is how to estimate such a projection direction, i.e., $\bm{\beta}_k$, in the problem~(\ref{lsp}).
One tradition approach to solve the problem~(\ref{lsp}) is the alternating optimization, which is an iterative procedure that estimating $f_k$ and $\bm{\beta}_k$ alternatively \citep{bezdek2002some}.
The alternating optimization approach, however, can be time-consuming due to the iteration steps.
To alleviate such a computational cost, we propose to use sufficient dimension reduction techniques.
These techniques allow one to obtain the most ``informative'' projection direction $\bm{\beta}_k$ without estimating the link function $f_k$, thus reduce the computational time dramatically.
We now introduce sufficient dimension reduction techniques in the following.
\fi

\textbf{Sufficient dimension reduction.}
Consider a regression problem with univariate response $Z$ and a $d$-dimensional predictor $X$.
Sufficient dimension reduction for regression aims to reduce the dimension of $X$ while preserving its regression relation with $Z$. In other words, sufficient dimension reduction seeks a set of linear combinations of $X$, say $\mathbf{B}^{\T}X$ with some $\mathbf{B} \in \RR^{d \times q}$ and $q \leq d$, such that 
$Z$ depends on $X$ only through $\mathbf{B}^{\T}X$, i.e., $Z \independent X | \mathbf{B}^{\T}X$. Then, the column space of $\mathbf{B}$, denoted as ${\cal S}(\mathbf{B})$ is called a dimension reduction space (DRS). Furthermore, if the union of all possible DRSs is also a DRS, we call it the central subspace and denote it as ${\cal S}_{Z|X}$. When  ${\cal S}_{Z|X}$ exists, it is the minimum DRS. We call a sufficient dimension reduction method exclusive if it induces a DRS that equals to the central subspace. Some popular sufficient dimension reduction techniques include sliced inverse regression (SIR) \citep{li1991sliced}, principal Hessian directions (PHD) \citep{li1992principal}, sliced average variance estimator (SAVE) \citep{cook1991sliced}, directional regression (DR) \citep{li2007directional}, among others.

\begin{wrapfigure}[18]{r}{0.48\textwidth}\label{Fig2}
\vspace{-0.2cm}
\includegraphics[width=0.48\textwidth]{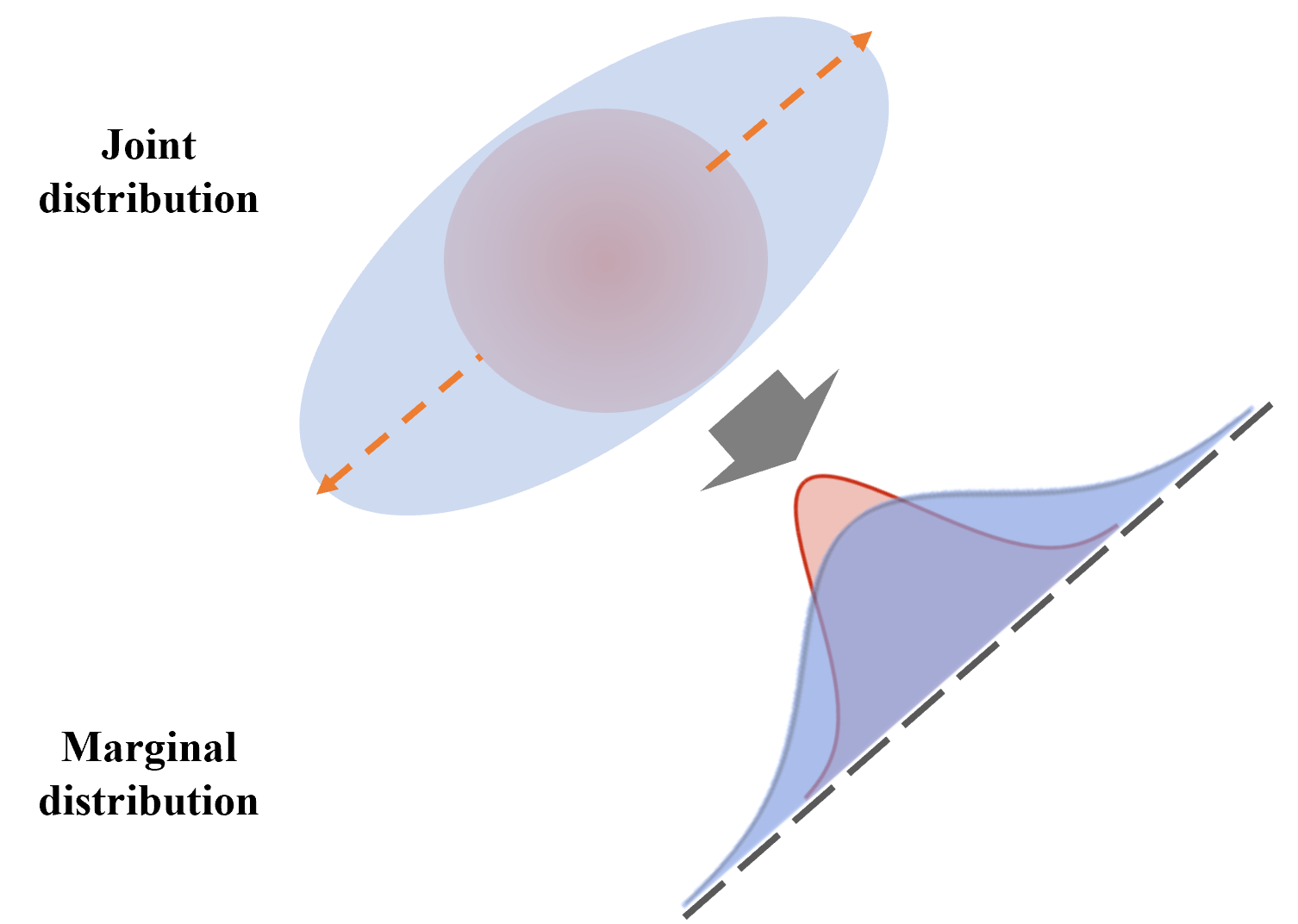}
 \vspace{-0.5cm}
  \caption{The most ``informative'' projection direction ensures the projected samples  (illustrated by the distributions colored in red and blue, respectively) have the largest ``discrepancy''.} 
\end{wrapfigure}

\textbf{Estimation of the most ``informative'' projection direction.}
Consider estimating an OTM between a source sample and a target sample. We first form a regression problem by adding a binary response, which equals zero for the source sample and one for the target sample.
We then utilize the sufficient dimension reduction technique to select the most ``informative'' projection direction.
To be specific, we select the projection direction $\bxi\in\mathbb{R}^d$ as the eigenvector corresponds to the largest eigenvalue of the estimated $\mathbf{B}$.
The direction $\bxi$ is most ``informative'' in the sense that, the projected samples $\X\bxi$ and $\Y\bxi$ have the most substantial `` discrepancy.'' 
The metric of the ``discrepancy'' depends on the choice of the sufficient dimension reduction technique.
Figure~\ref{Fig2} gives a toy example to illustrate this idea.
In this paper, we opt to use SAVE for calculating $\mathbf{B}$, and hence the ``discrepancy'' metric is the difference between $\mbox{Var}(\X\bxi)$ and $\mbox{Var}(\Y\bxi)$.
Empirically, we find other sufficient dimension reduction techniques, like PHD and DR, also yield similar performance.
The SIR method, however, yields inferior performance, since it only considers the first moment. The Algorithm \ref{alg:ALG1} below introduces our estimation method of ``informative" projection direction in detail.

\begin{algorithm} [H]
        \caption{Select the most ``informative'' projection direction using SAVE}
        \label{alg:ALG1}
        \begin{algorithmic}
        \State\textbf{Input:} two standardized matrix $\mathbf{X}\in \mathbb{R}^{n\times d}$ and $\mathbf{Y}\in \mathbb{R}^{n\times d}$
        \State \textit{Step 1:} calculate $\widehat{\S}\in \mathbb{R}^{d\times d}$, i.e., the sample variance-covariance matrix of $\binom{\mathbf{X}}{\mathbf{Y}}$
        \State \textit{Step 2:} calculate the sample variance-covariance matrices of $\X\widehat{\S}^{-1/2}$ and $\Y\widehat{\S}^{-1/2}$, denoted as $\widehat{\S}_1\in \mathbb{R}^{d\times d}$ and $\widehat{\S}_2\in \mathbb{R}^{d\times d}$, respectively
        \State \textit{Step 3:} calculate the eigenvector $\bxi\in\mathbb{R}^d$, which corresponding to the largest eigenvalue of the matrix
        $((\widehat{\S}_1-I_d)^2+(\widehat{\S}_2-I_d)^2)/4$
        \State\textbf{Output:}  the final result is given by $\widehat{\S}^{-1/2}\bxi/||\widehat{\S}^{-1/2}\bxi||$, where $||\cdot||$ denotes the Euclidean norm
        \end{algorithmic}
\end{algorithm}

\textbf{Projection pursuit Monge map algorithm.}
Now, we are ready to present our estimation method for large-scale OTM. The detailed algorithm, named projection pursuit Monge map, is summarized in Algorithm \ref{alg:ALG2} below. 
In each iteration, the PPMM applies a one-dimensional OTM following the most ``informative'' projection direction selected by the Algorithm 1. 
\begin{algorithm}
        \caption{Projection pursuit Monge map (PPMM)}
        \label{alg:ALG2}
        \begin{algorithmic}
        \State \textbf{Input:} two matrix $\X\in \mathbb{R}^{n\times d}$ and $\Y\in \mathbb{R}^{n\times d}$
    \State $k\leftarrow 0$, $\X^{[0]}\leftarrow \X$
    \Repeat
    \State (a) calculate the projection direction $\bxi_k\in\mathbb{R}^d$ between $\X^{[k]}$ and $\Y$ (using Algorithm 1)
    \State (b) find the one-dimensional OTM $\phi^{(k)}$ that matches $\X^{[k]}\bxi_k$ to $\Y\bxi_k$ (using look-up table)
    \State (c) $\X^{[k+1]}\leftarrow \X^{[k]}+(\phi^{(k)}(\X^{[k]}\bxi_{k})-\X^{[k]}\bxi_{k})\bxi_k^\T$ and $k\leftarrow k+1$
    \Until converge
    \State The final estimator is given by $\widehat{\phi}:\X\rightarrow\X^{[k]}$
    \end{algorithmic}
\end{algorithm}

\textbf{Computational cost of PPMM.}
In Algorithm \ref{alg:ALG2}, the computational cost mainly resides in the first two steps within each iteration.
In step (a), one calculates $\bxi_k$ using Algorithm 1, whose computational cost is of order $O(nd^2)$.
In step (b), one calculates a one-dimensional OTM using the look-up table, which is simply a sorting algorithm \citep{pitie2007automated,peyre2019computational}.

The computational cost for step (b) is of order $O(n\log(n))$.
Suppose that the algorithm converges after $K$ iterations.
The overall computational cost of Algorithm 2 is of order $O\left(Knd^2+Kn\log(n)\right)$.
Empirically, we find $K=O(d)$ works reasonably well.
When $\log(n)^{1/2} \leq d \ll n^{2/3}$, the order of computational cost of PPMM is $o\left(n^3\log(n)\right)$ which is smaller than the computational cost of the naive method for calculating OTMs.
When $d \leq \log(n)^{1/2}$, the order of computational cost reduces to $O\left(Kn\log(n)\right)$ which is faster than the exiting projection-based methods given PPMM converges faster.
The memory cost for Algorithm 2 mainly resides in the step (a), which is of the order $O(Knd^2)$.

% %%%%%%%%%%%%%%%%%%%%%%%%%%%%%%%%%%%%%%%%%%%%%%%%%%%
% % Section 4
% %%%%%%%%%%%%%%%%%%%%%%%%%%%%%%%%%%%%%%%%%%%%%%%%%%%
\section{Theoretical results}\label{sec:thm}

\textbf{Exclusiveness of SAVE.}
For mathematical simplicity, we assume $E[X]=E[Y]={\bf 0}_d$.
When $E[X]\neq E[Y]$, one can use a first-order dimension reduction method like SIR to adjust means before applying SAVE.

Denote $W=(X+Y)/2$, $\Sigma_W=\Var(W)$, and $Z=W\Sigma_W^{-1/2}$. For a univariate continuous response variable $R$, one can approximate the central subspace $\cS_{R|Z}$ by $\cS_{\text{SAVE}}$, which is the population version of the dimension reduction space of SAVE. To be specific, $\cS_{\text{SAVE}}$ is the column space of matrix
\begin{equation*}%\label{eq_pop_save}
 E[\Var(Z|R)-I_d]^2=\frac{1}{4}\left\{E[\Var(X\Sigma^{-1/2}_W|R)-I_d]^2 + E[\Var(Y\Sigma^{-1/2}_W|R)-I_d]^2 \right\}, \
\end{equation*}
where the above equation used the fact that $X \independent Y$.

\begin{assumption}\label{ass:1}
Let $P$ be the projection onto the central space $\cS_{R|Z}$ with respect to the inner project $a\cdot  b=a^{\T}b$. For any nonzero vectors $u,v \in \RR^d$, such that $u$ is orthogonal to $\cS_{R|Z}$ and $v \in \cS_{R|Z}$, we assume
\begin{itemize}
\itemsep-0.3em
    \item [(a)] $E(u^{\T}Z|PZ)$ is a linear function of Z;
    \item [(b)] $\Var(u^{\T}Z|PZ)$ is a nonrandom number;
    \item [(c)] Let $(\widetilde{Z}, \widetilde{R})$ be an independent copy of $(Z, R)$. $E\left[v^{\T} (Z-\widetilde{Z})^2 | R, \widetilde{R} )\right]$ is non degenerate; that is, it is not equal almost surely to a constant.
\end{itemize}
\end{assumption}

%Conditions (a) and (b) in Assumption \ref{ass:1} or their stronger variants are widely imposed in dimension reduction methods (e.g., SAVE, PHD, and DR) as 

\begin{theorem}\label{thm:thm1}
Let $R$ be a univariate continuous response variable. Under Assumption 1, the dimension reduction space induced by SAVE is exclusive. In other words, $\cS_{\text{SAVE}}= \cS_{R|Z}$.
\end{theorem}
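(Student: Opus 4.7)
The plan is to prove the equality $\cS_{\text{SAVE}} = \cS_{R|Z}$ by establishing the two set inclusions separately. The inclusion $\cS_{\text{SAVE}} \subseteq \cS_{R|Z}$ is the \emph{Fisher-consistency} direction and will use only conditions (a) and (b), while the reverse inclusion $\cS_{R|Z} \subseteq \cS_{\text{SAVE}}$ is the \emph{exhaustiveness} direction, where condition (c) is essential. The latter is where SAVE can fail without an additional non-degeneracy assumption, so it is the key step.

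For $\cS_{\text{SAVE}} \subseteq \cS_{R|Z}$, let $M = E[\Var(Z|R) - I_d]^2$. I would pick an arbitrary $u$ orthogonal to $\cS_{R|Z}$ and aim to show $u^\T(\Var(Z|R) - I_d) = 0$ a.s. Using $R \independent Z \mid PZ$ (definition of the central subspace) together with condition (a) and the standardization $E[Z] = \mathbf{0}$, $\Var(Z) = I_d$, I would first argue that $E[u^\T Z \mid R] = E\{E[u^\T Z \mid PZ] \mid R\} = 0$, because the linear predictor of $u^\T Z$ on $PZ$ vanishes given $\Cov(u^\T Z, PZ) = u^\T P = 0$. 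Condition (b), combined with polarization applied to $u_1 \pm u_2$ (which remain orthogonal to $\cS_{R|Z}$), upgrades to $\Cov(u_1^\T Z, u_2^\T Z \mid PZ) = u_1^\T u_2$ a.s. for any $u_1, u_2 \perp \cS_{R|Z}$. For $v \in \cS_{R|Z}$, the scalar $v^\T Z = v^\T PZ$ is $\sigma(PZ)$-measurable, so iterating expectations and splitting any $w \in \RR^d$ into its parallel and perpendicular parts yields $u^\T \Var(Z|R) w = u^\T w$ a.s. Hence $u^\T(\Var(Z|R) - I_d) = 0$ a.s.; by symmetry the column space of $\Var(Z|R) - I_d$, and so of $M$, lies in $\cS_{R|Z}$.

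For $\cS_{R|Z} \subseteq \cS_{\text{SAVE}}$ I would argue by contradiction. Suppose $v \in \cS_{R|Z}$ is nonzero with $Mv = 0$. Then $v^\T M v = E\|(\Var(Z|R) - I_d)v\|^2 = 0$ forces $\Var(Z|R) v = v$ a.s., and hence $\Var(v^\T Z \mid R) = \|v\|^2$ a.s. Combined with $\Var(v^\T Z) = \|v\|^2$ and the law of total variance, this gives $\Var(E[v^\T Z \mid R]) = 0$, so $E[v^\T Z \mid R]$ equals its marginal expectation $0$ almost surely. Using independence of $(Z, R)$ from its copy $(\widetilde{Z}, \widetilde{R})$, I would then compute
\[
E[(v^\T(Z - \widetilde{Z}))^2 \mid R, \widetilde{R}] = \Var(v^\T Z \mid R) + \Var(v^\T Z \mid \widetilde{R}) + (E[v^\T Z \mid R] - E[v^\T Z \mid \widetilde{R}])^2 = 2\|v\|^2,
\]
an almost-sure constant, directly contradicting the non-degeneracy required in condition (c). Hence $\ker(M) \cap \cS_{R|Z} = \{0\}$, which combined with the first inclusion gives $\cS_{\text{SAVE}} = \cS_{R|Z}$.

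The hard part is the bookkeeping in Step 1: condition (b) as stated constrains only the conditional variance of a single linear combination, so lifting it to cross-covariances via polarization and handling the decomposition of an arbitrary vector into parallel and perpendicular components requires care. Step 2 is then short, and it makes explicit why condition (c) is the right hypothesis: it rules out precisely the pathological case in which both the conditional mean and the conditional variance of $v^\T Z$ given $R$ coincide with their unconditional counterparts.
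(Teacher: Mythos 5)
Your proof is correct, but it takes a genuinely different route from the paper's. The paper does not argue directly on the SAVE kernel $M = E[\Var(Z|R)-I_d]^2$; instead it introduces the directional-regression-style objects $A(R,\widetilde{R}) = E[(Z-\widetilde{Z})(Z-\widetilde{Z})^{\T}|R,\widetilde{R}]$, $C = 2I_d - A(R,\widetilde{R})$, and $G = E(C^2)$, proves $\cS_{\text{SAVE}} = \mathrm{span}(G)$ as a separate lemma (invoking Theorem 2 of Li and Wang's directional regression paper), shows $\mathrm{span}(G) \subseteq \cS_{R|Z}$ under conditions (a) and (b), and then obtains exhaustiveness from the decomposition $v^{\T}Gv = v^{\T}E[C(I_d-vv^{\T})C]v + E[(v^{\T}Cv)^2]$ together with Jensen's inequality and condition (c). Your version dispenses with the detour through $G$: the Fisher-consistency half is the classical linearity-plus-constant-variance argument (your polarization step to get cross-covariances and the parallel/perpendicular split of an arbitrary $w$ are exactly the bookkeeping needed, and they go through since $E[u^{\T}Z|PZ]=0$ and $v^{\T}Z$ is $\sigma(PZ)$-measurable for $v\in\cS_{R|Z}$), and the exhaustiveness half is a clean contradiction via the law of total variance and the independent-copy identity $E[(v^{\T}(Z-\widetilde{Z}))^2|R,\widetilde{R}] = 2\|v\|^2$. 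Note that both proofs ultimately use condition (c) through the same scalar $v^{\T}A(R,\widetilde{R})v$; the paper shows its non-degeneracy forces $E[(v^{\T}Cv)^2]>0$, while you show that membership of $v$ in $\ker(M)\cap\cS_{R|Z}$ would force that scalar to be constant. What the paper's route buys is the explicit link between SAVE and the directional regression subspace; what yours buys is self-containedness and a transparent reading of why (c) is exactly the right non-degeneracy hypothesis. One cosmetic point: in your displayed identity the second and third summands should read $\Var(v^{\T}\widetilde{Z}\mid\widetilde{R})$ and $(E[v^{\T}Z\mid R]-E[v^{\T}\widetilde{Z}\mid\widetilde{R}])^2$, though since $(\widetilde{Z},\widetilde{R})$ is an i.i.d.\ copy this does not affect the conclusion.
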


\textbf{Consistency of the most ``informative'' projection direction.}
Let $\widehat{\Sigma}_1$ and $\widehat{\Sigma}_2$ be the sample covariance matrix estimator of $\Sigma_1$ and $\Sigma_2$, respectively. Denote 
\$
\Sigma_{\mathrm{SAVE}}=\frac{1}{4}\left[ (\Sigma_1-I_d)^2+(\Sigma_2-I_d)^2\right]
\quad \text{and} \quad 
\widehat{\Sigma}_{\mathrm{SAVE}}=\frac{1}{4}\left[ (\widehat{\Sigma}_1-I_d)^2+(\widehat{\Sigma}_2-I_d)^2\right].
\$ 
Denote $\bxi_1$ and $\widehat{\bxi}_1$ the eigenvectors correspond to the largest eigenvalues of $\Sigma_{\mathrm{SAVE}}$ and $\widehat{\Sigma}_{\mathrm{SAVE}}$, respectively. Further, denote $r=\mathrm{Rank}(\Sigma_{\mathrm{SAVE}})$, the rank of $\Sigma_{\mathrm{SAVE}}$. 

\begin{assumption}\label{ass:2}
Let $\{\bx_i, \by_i\}_{i=1}^n$ be an i.i.d. sample of $(X, Y)$.  We assume that
\begin{itemize}
    \item [(a)] Denote $x_{ij}$ and $y_{ik}$ the $j$th and $k$th component of $\bx_i$ and $\by_i$, respectively. $E(x_{ij}y_{ik})=0$ \ for all $1\leq i \leq n$ and $1 \leq j, k \leq d$;
    \item [(b)] There are $r_1, r_2 >0$ and $b_1, b_2 >0$ such that, for any $s>0$, $1\leq i \leq n$ and $1 \leq j \leq d$,
    \$
    P(|x_{ij}|>s)  \leq \exp\left\{-(s/b_1)^{r_1}\right\} \quad \text{and} \quad
    P(|y_{ij}|>s)   \leq \exp\left\{-(s/b_2)^{r_2}\right\};
    \$
    \item [(c)] Let $\lambda_1\,\ldots, \lambda_d$ be the eigenvalues of $\Sigma_{\mathrm{SAVE}}$ in descending order. There exist positive constants $c_l$ $c_u$ and $c_3$ such that
    \$
    c_l  \leq \min\limits_{1\leq l \leq r-1} (\lambda_l -\lambda_{l+1})d^{-1/2} \leq c_u , \quad \text{and} \quad 0\leq \lambda_{r+1} < c_3.
    \$
\end{itemize}
\end{assumption}

Theorem~\ref{thm:thm2} shows that Algorithm 1 can consistently estimate the most ``informative'' projection direction.
The $O_p$ in Theorem~\ref{thm:thm2} stands for order in probability, which is similar to $O$ but for random variables.

\begin{theorem}\label{thm:thm2}
Under Assumption 2, the SAVE estimator of  most ``informative'' projection direction satisfies,
$$
\|\widehat{\bxi}_1 - \bxi_1 \|_\infty =O_p (r^4\sqrt{\frac{\log d}{n}} + r^4 \sqrt{d}\frac{\log d}{n}),
\quad \text{as} \quad n, d \rightarrow \infty.
$$
\end{theorem}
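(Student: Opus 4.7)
The plan is to reduce the claim to an entry-wise perturbation bound on $\widehat{\Sigma}_{\mathrm{SAVE}} - \Sigma_{\mathrm{SAVE}}$ and then transfer it to the eigenvectors via a Neumann-type perturbation expansion, exploiting the eigengap supplied by Assumption~\ref{ass:2}(c). A structural fact I will use throughout is that, since $\Sigma_{\mathrm{SAVE}} = \frac{1}{4}[(\Sigma_1-I_d)^2+(\Sigma_2-I_d)^2]$ has rank $r$ and each summand is positive semidefinite, each symmetric matrix $\Sigma_j - I_d$ itself has rank at most $r$. This low-rank structure is what prevents the extra factors of $d$ from appearing in the leading term of the rate and is the source of the $r^4$ dependence.

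\textbf{Step 1 (entry-wise concentration of the blocks).} Assumption~\ref{ass:2}(b) makes each coordinate of $X$ and $Y$ sub-Weibull. A Bernstein-type inequality for products of sub-Weibull random variables, combined with a union bound over the $d^2$ entries and with the decoupling between the $X$ and $Y$ blocks permitted by Assumption~\ref{ass:2}(a), should yield
\[
\max_{j\in\{1,2\}}\|\widehat{\Sigma}_j - \Sigma_j\|_{\max} = O_p\!\left(\sqrt{\tfrac{\log d}{n}} + \tfrac{\log d}{n}\right).
\]
\textbf{Step 2 (propagation through the SAVE map).} Writing $\Delta_j=\widehat{\Sigma}_j-\Sigma_j$ and expanding
\[
(\widehat{\Sigma}_j-I_d)^2 - (\Sigma_j-I_d)^2 = \Delta_j(\Sigma_j-I_d) + (\Sigma_j-I_d)\Delta_j + \Delta_j^2,
\]
I pass the rank-$r$ spectral factorization $\Sigma_j - I_d = \sum_{l=1}^{r}\mu_l u_l u_l^{\T}$ through each cross term so that the inner summation inherits a Cauchy--Schwarz bound of order $\sqrt{d}\,\|\Delta_j\|_{\max}$, giving $O(r\sqrt{d})\|\Delta_j\|_{\max}$ in max norm. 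The quadratic term obeys the trivial $\|\Delta_j^2\|_{\max}\le d\,\|\Delta_j\|_{\max}^2$. Combining yields
\[
\|\widehat{\Sigma}_{\mathrm{SAVE}}-\Sigma_{\mathrm{SAVE}}\|_{\max} = O_p\!\left(r\sqrt{d}\,\sqrt{\tfrac{\log d}{n}} + d\,\tfrac{\log d}{n}\right),
\]
so that after dividing by the eigengap $\sqrt{d}$ the correct shape of the final rate already emerges.

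\textbf{Step 3 (entry-wise eigenvector perturbation).} By Assumption~\ref{ass:2}(c) every gap $\lambda_l-\lambda_{l+1}$ for $l\le r$ is of order $\sqrt{d}$. Starting from the first-order representation
\[
\widehat{\bxi}_1 - \bxi_1 = \sum_{l=2}^{r}\frac{\bxi_l^{\T}(\widehat{\Sigma}_{\mathrm{SAVE}}-\Sigma_{\mathrm{SAVE}})\bxi_1}{\lambda_1-\lambda_l}\,\bxi_l + \text{remainder},
\]
I bound the $\ell_\infty$ norm of the leading sum by summing at most $r$ signal directions of the max-norm perturbation from Step~2, divided by $\sqrt{d}$. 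The remainder is handled by an iterated resolvent (Neumann series) expansion in the gap-scaled operator $(\lambda_1 I - \Sigma_{\mathrm{SAVE}})^{-1}(\widehat{\Sigma}_{\mathrm{SAVE}}-\Sigma_{\mathrm{SAVE}})$, with each iterate accumulating an additional rank-$r$ factor. Collecting the accumulated $r$'s from the cross-term expansion in Step~2, from the summation over the $r$ leading eigenvectors, and from the iterative remainder bound produces the stated $r^4\sqrt{\log d/n} + r^4\sqrt{d}\,\log d/n$.

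\textbf{Main obstacle.} The hard part is to remain in the max norm throughout instead of retreating to operator norm, because a Davis--Kahan bound converted at the end via $\|\cdot\|_\infty\le\|\cdot\|_2$ would cost an extra $\sqrt{d}$ that the stated rate cannot absorb. The remedy is never to leave the entry-wise framework: the rank-$r$ factorization of each $\Sigma_j-I_d$ must be carried into Step~3 (e.g.\ via a leave-one-out style analysis of the eigenvectors), and the resolvent expansion must be truncated at a depth whose remainder is genuinely dominated by the leading $\sqrt{d}\log d/n$ term. Verifying that the $r^4$ exponent is the tight bookkeeping of these two rank-$r$ expansions, rather than an artifact of loose matrix algebra, is the most delicate piece of the argument.
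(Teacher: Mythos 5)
Your overall architecture --- an entry-wise bound on $\widehat{\Sigma}_{\mathrm{SAVE}}-\Sigma_{\mathrm{SAVE}}$ followed by an $\ell_\infty$ eigenvector perturbation step scaled by the $\sqrt{d}$-order eigengap --- is exactly the paper's. Your Steps 1--2 are sound and in one respect cleaner than the paper's: you use the exact non-commutative expansion $\Delta_j(\Sigma_j-I_d)+(\Sigma_j-I_d)\Delta_j+\Delta_j^2$, whereas the paper writes $(\widehat{\Sigma}_j-I_d)^2-(\Sigma_j-I_d)^2=(\widehat{\Sigma}_j+\Sigma_j-2I_d)(\widehat{\Sigma}_j-\Sigma_j)$, which implicitly assumes commutativity; your observation that each $\Sigma_j-I_d$ inherits rank at most $r$ from $\Sigma_{\mathrm{SAVE}}$ is also a genuinely useful structural point the paper does not exploit (it instead bounds $\|\Sigma_j-I_d\|_2=O(\sqrt{d})$ and $\|\widehat{\Sigma}_j-\Sigma_j\|_2\le d\|\widehat{\Sigma}_j-\Sigma_j\|_{\max}$).

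The genuine gap is your Step 3, and you have in effect named it yourself. The entire content of the $r^4 d^{-1/2}$ prefactor is the inequality
$$
\|\widehat{\bxi}_1-\bxi_1\|_\infty \lesssim d^{-3/2}\left(r^4\|\widehat{\Sigma}_{\mathrm{SAVE}}-\Sigma_{\mathrm{SAVE}}\|_\infty + r^{3/2}\|\widehat{\Sigma}_{\mathrm{SAVE}}-\Sigma_{\mathrm{SAVE}}\|_2\right)
\lesssim r^4 d^{-1/2}\|\widehat{\Sigma}_{\mathrm{SAVE}}-\Sigma_{\mathrm{SAVE}}\|_{\max},
$$
which the paper obtains by directly invoking Theorem 3 and Proposition 3 of Fan et al.\ (2018) on $\ell_\infty$ eigenvector perturbation for approximately low-rank matrices. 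Your plan to rederive this from a first-order expansion plus a Neumann-series remainder does not close as written: the first-order term $\sum_{l\ge 2}(\lambda_1-\lambda_l)^{-1}\bxi_l^{\T}E\,\bxi_1\,\bxi_l$ cannot be controlled entry-wise from $\|E\|_{\max}$ alone without delocalization (incoherence) estimates for \emph{all} eigenvectors of $\Sigma_{\mathrm{SAVE}}$, and those estimates --- together with the bookkeeping that yields $r^4$ rather than some other power --- are precisely what the cited perturbation theorem supplies and what your sketch defers (``the most delicate piece''). Either carry out that incoherence analysis explicitly or, as the paper does, cite the $\ell_\infty$ perturbation result; as it stands, the step on which the claimed rate actually depends is asserted rather than proved.
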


\textbf{Weak convergence of PPMM algorithm.}
Denote $\phi^*$ as the $d$-dimensional optimal transport map from $p_X$ to $p_Y$ and $\phi^{(K)}$ as the PPMM estimator after $K$ iterations, i.e. $\phi^{(K)}(\X)=\X^{[K]}$. The following theorem gives the weak convergence results of the PPMM algorithm. 

\begin{theorem}\label{thm:thm3}
Suppose Assumption 1 and Assumption 2 hold.  Let $K \geq Cd$ for some large enough positive constant $C$, one has
$$
\widehat{W}_p\Big(\phi^{(K)}(\X), \X\Big) \rightarrow W_p\Big(\phi^{*}(X), X \Big), 
\quad \text{and} \quad \phi^{(K)}(\X) \rightarrow \phi^{*}(X)
\quad \text{as} \quad n \rightarrow \infty.
$$
\end{theorem}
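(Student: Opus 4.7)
The plan is to establish Theorem~\ref{thm:thm3} in three stages: (i) control the per-iteration approximation error when the estimated SAVE direction is used in place of the true ``most informative'' direction; (ii) show that the population version of the PPMM iteration converges weakly to $\phi^*$ after $O(d)$ steps; and (iii) glue (i) and (ii) together via a triangle-inequality argument in $W_p$ to obtain the double convergence claimed in the theorem.

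For stage (i), fix iteration $k$ and condition on the empirical residual $\X^{[k]}$. The one-dimensional empirical OTM $\phi^{(k)}$ is obtained by sorting, so by classical quantile-process theory it converges to the true 1D OTM between the marginals of $p_{X^{[k]}}$ and $p_Y$ along $\widehat{\bxi}_k$ at the parametric rate. By Theorem~\ref{thm:thm2}, $\widehat{\bxi}_k$ is consistent for the population most-informative direction $\bxi_k$. A Lipschitz continuity argument for the 1D OTM functional with respect to the projection direction (using the smoothness of $p_X, p_Y$ implied by the sub-exponential tails in Assumption~\ref{ass:2}(b)) then shows that the difference between the empirical update and its population analogue in step (c) of Algorithm~\ref{alg:ALG2} is $o_p(1)$, uniformly in $k$, provided $K = O(d)$ so that error accumulation remains controlled.

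For stage (ii), the key observation is that Theorem~\ref{thm:thm1} guarantees $\cS_{\text{SAVE}} = \cS_{R|Z}$, so at the population level the direction picked at iteration $k$ captures a genuine remaining direction of discrepancy between $p_{X^{[k]}}$ and $p_Y$. After the $k$-th update, the marginals of source and target along $\bxi_k$ coincide exactly, since the 1D Monge map is measure-preserving, and subsequent iterations operate only on the complementary directions. An inductive argument in the population regime shows that after at most $Cd$ iterations the residual discrepancy is exhausted, so $p_{X^{[K]}}$ and $p_Y$ agree on every one-dimensional projection. By the Cramer--Wold device this yields $p_{X^{[K]}} \to p_Y$ weakly, and the monotone-rearrangement nature of each 1D update together with Brenier's theorem identifies the composed map $\phi^{(K)}$ with the unique Monge map $\phi^*$ in the weak limit. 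Continuity of $W_p$ under weak convergence (plus the uniform moment control from Assumption~\ref{ass:2}(b)) then transfers the distributional convergence to $\widehat{W}_p(\phi^{(K)}(\X), \X) \to W_p(\phi^{*}(X), X)$.

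The hardest part I anticipate is controlling error accumulation across the $K = O(d)$ iterations in stage (i): each iteration introduces both a 1D empirical OTM error and a direction-estimation error, and a naive bound would yield a factor $K$ that competes with the $n^{-1/2}$ sampling rate delivered by sorting. The cleanest route is to track the squared $W_2$ residual $\widehat{W}_2^2(p_{X^{[k]}}, p_Y)$ as a Lyapunov-type potential, show that each population update decreases it by a quantity proportional to the leading SAVE eigenvalue, and perturb this decrease using Theorem~\ref{thm:thm2} to absorb the estimation noise. A secondary subtlety is to justify that the population PPMM operator is non-expansive in $W_p$, so that errors committed in early iterations do not get amplified later; once this is in place, the weak convergence of $\phi^{(K)}(\X)$ to $\phi^{*}(X)$ follows by combining the stage (ii) weak limit with a standard Slutsky-type argument using the vanishing empirical perturbation from stage (i).
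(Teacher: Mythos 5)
Your overall skeleton matches the paper's: both split $\widehat{W}_p\big(\phi^{(K)}(\X),\X\big)$ into a sampling-error term (empirical versus population Wasserstein distance for a fixed map) and an algorithmic term (population PPMM iterate versus $\phi^*$), and both close the argument with the equivalence between weak convergence and $W_p$-convergence on compact supports (the paper's Lemma~\ref{lem:4}). The paper handles the sampling term by directly invoking an a.s.\ consistency result for the empirical Wasserstein distance (its Lemma~\ref{lem:5}, from Klein et al.), rather than your per-iteration quantile-process/Lipschitz propagation; and it handles the algorithmic term by tracking the relative gain $\gamma^{[k]}=\lambda_k/\lambda_0$, where $\lambda_k=\|\Sigma_{\mathrm{SAVE}}^{[k]}\|_2$ is the leading SAVE eigenvalue of the current residual, arguing via Assumption~\ref{ass:2}(c) and Theorem~\ref{thm:thm2} that this ratio vanishes once $k\geq Cd$. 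Your closing remark about using $\widehat{W}_2^2$ or the leading SAVE eigenvalue as a Lyapunov potential is in fact much closer to what the paper actually does than your stage~(ii) is.

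The genuine gap is in your stage~(ii). First, the claim that after the $k$-th update the marginals along $\bxi_k$ ``coincide exactly, and subsequent iterations operate only on the complementary directions'' is false in general: the $(k+1)$-th one-dimensional transport along a non-orthogonal (or even orthogonal, for non-product measures) direction $\bxi_{k+1}$ perturbs the joint law and generically destroys the marginal agreement achieved along $\bxi_k$. This is precisely the known failure mode of iterative projection methods that motivates the convergence analysis in the first place, so the induction ``residual discrepancy is exhausted after $Cd$ steps'' does not go through. Second, even granting exhaustion along the $Cd$ chosen directions, the Cram\'er--Wold device requires agreement of the one-dimensional projections along \emph{every} direction in $\mathbb{S}^{d-1}$, not along finitely many; matching $Cd$ projections does not identify the distribution. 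The paper avoids both problems by measuring progress through $\|\Sigma_{\mathrm{SAVE}}^{[k]}\|_2$, which quantifies the \emph{maximal} remaining (second-moment) discrepancy over all directions simultaneously, so its decay to zero is the certificate of convergence rather than a direction-by-direction bookkeeping. If you want to salvage your route, you would need to replace the exhaustion induction with a monotone-decay argument for such a global discrepancy functional.
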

Works are proving the convergence rates of the empirical optimal transport objectives \citep{boissard2011simple,sriperumbudur2012empirical,boissard2014mean,weed2017sharp}. 
%However, to our knowledge, the convergence rate of the OTM has not been studied in the literature.
The convergence rate of the OTM has rarely been studied except for a recent paper \citep{hutter2019minimax}.
We believe Theorem~\ref{thm:thm3} is the first step in this direction.

% %%%%%%%%%%%%%%%%%%%%%%%%%%%%%%%%%%%%%%%%%%%%%%%%%%%
% % Section 5
% %%%%%%%%%%%%%%%%%%%%%%%%%%%%%%%%%%%%%%%%%%%%%%%%%%%
\section{Numerical experiments}\label{sec:exp}

\subsection{Estimation of optimal transport map}

Suppose that we observe i.i.d. samples $\X=(\x_1, \ldots, \x_n)^\T$ from $p_X={\cal N}_d(\bm\mu_X, \S_X)$ and  $\Y=(\y_1, \ldots, \y_n)^\T$ from $p_Y={\cal N}_d(\bm\mu_Y, \S_Y)$, respectively. We set $n=10,000$, $d=\{10,20,50\}$, $\bm\mu_X=\mathbf{-2}$, $\bm\mu_Y=\mathbf{2}$, $\S_X=0.8^{|i-j|}$, and $\S_Y=0.5^{|i-j|}$, for $i,j=1,\ldots,d$.

We apply PPMM to estimate the OTM between $p_X$ and $p_Y$ from $\{\bx_i\}_{i=1}^n$ and $\{\by_i\}_{i=1}^n$.
In comparison, we also consider the following two projection-based competitors: (1) the random projection method (RANDOM) as proposed in \citep{pitie2005n,pitie2007automated}; (2) the sliced method as proposed in \citep{bonneel2015sliced,rabin2011wasserstein}. The number of slices $L$ is set to be 10, 20, and 50.
% \begin{itemize}
%     \item[] RANDOM: the random projection method as proposed in \citep{pitie2005n,pitie2007automated};
%     \item[] SLICED: the sliced method as proposed in \citep{bonneel2015sliced,rabin2011wasserstein}. The number of slices $L$ is set to be 10, 20, and 50.
% \end{itemize}
We assess the convergence of each method by the estimated Wasserstein distance of order 2 after each iteration, i.e. $\widehat{W}_2\Big(\phi^{(k)}(\X), \X\Big)$, where $\phi^{(k)}(\cdot)$ is the estimator of OTM after $k$th iteration. For all three methods, we set the maximum number of iterations to be 200. Notice that, the Wasserstein distance between $p_X$ and $p_Y$ admits a closed form,
\#\label{W_dist}
W_2^2(p_X,p_Y)=||\bm\mu_X-\bm\mu_Y||_2^2+trace\left(\S_X+\S_Y-2(\S_X^{1/2}\S_Y\S_X^{1/2})^{1/2}\right),
\#
which serves as the ground-truth. The results are presented in Figure \ref{simu1}.

\begin{wrapfigure}[13]{r}{0.6\textwidth}
 \vspace{-0.6cm}
\includegraphics[width=0.6\textwidth]{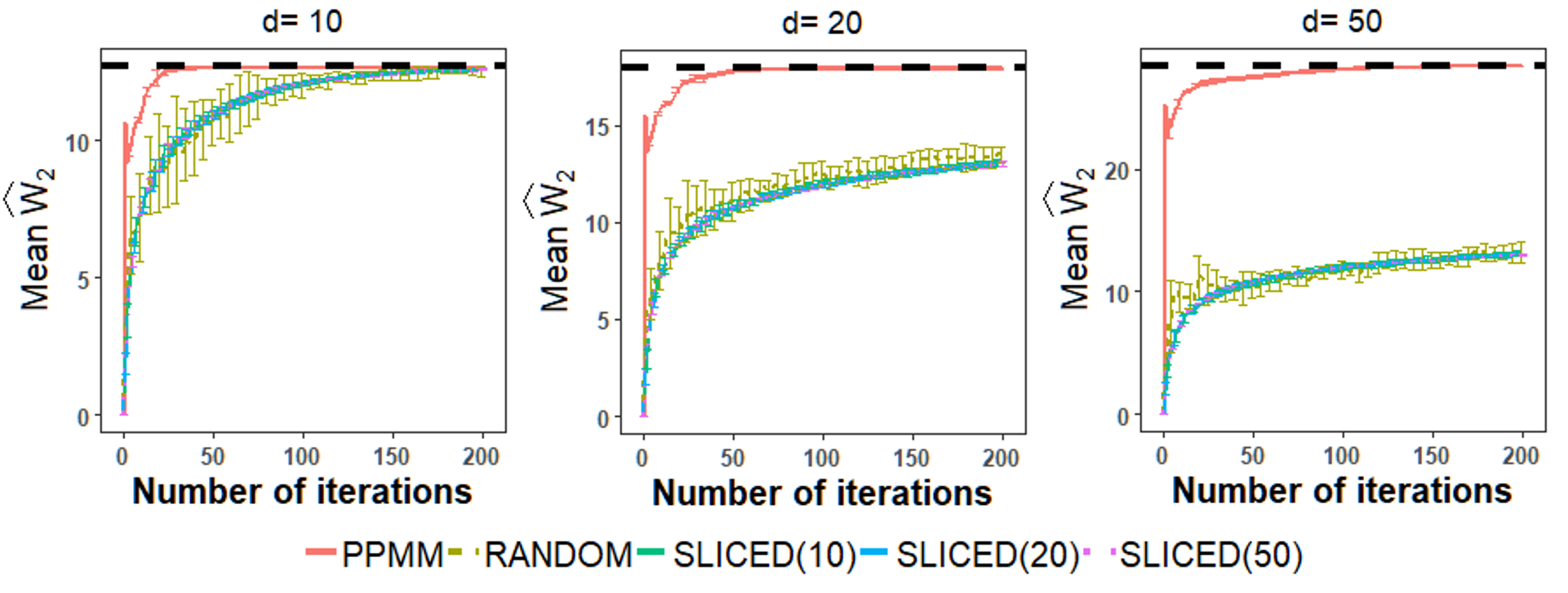}
 \vspace{-0.8cm}
  \caption{The black dashed line is the true value of the Wasserstein distance as in \eqref{W_dist}. The colored lines represent the sample mean of the estimated Wasserstein distances over 100 replications, and the vertical bars represent the standard deviations.
  }\label{simu1}
\end{wrapfigure}

In all three scenarios, PPMM (red line) converges to the ground truth within a small number of iterations. 
The fluctuations of the convergence curves observed in Figure \ref{simu1} are caused by the non-equal sample means. This can be adjusted by applying a first-order dimension reduction method (e.g., SIR). We do not pursue this approach as the fluctuations do not cover the main pattern in Figure \ref{simu1}.
When $d=10$, RANDOM and SLICED converge to the ground truth but in a much slower manner. When $d=20$ and $50$, neither RANDOM nor SLICED manages to converge within 200 iterations. 
We also find a large number of slices $L$ does not necessarily lead to a better estimation for the SLICED method.
As we can see, PPMM is the only one among three that is adaptive to large-scale OTM estimation problems.

In Table 1 below, we compare the computational cost of three methods by reporting the CPU time per iteration over 100 replication.\footnote{ 
The experiments are implemented by an Intel 2.6 GHz processor.}
As we expected, the RANDOM method has the lowest CPU time per iteration due to it does not select projection direction. We notice that the CPU time per iteration of the SLICED method is proportional to the number of slices $L$. Last but not least, the CPU time per iteration of PPMM  is slightly larger than RANDOM but much smaller than SLICED.

\begin{table}[h]
\begin{center}
\caption{The mean CPU time (sec) per iteration, with standard deviations presented in parentheses}\label{tab1}
\begin{tabular}{ c c c c c c } 
 \hline
         & PPMM & RANDOM & SLICED(10) & SLICED(20) & SLICED(50) \\ 
 \hline
 $d=10$  & 0.019 (0.008) & 0.011 (0.008) & 0.111 (0.019) & 0.213 (0.024) &   0.529 (0.031) \\ 
 $d=20$  & 0.027 (0.011) & 0.014 (0.008) & 0.125 (0.027) & 0.247 (0.033) &   0.605 (0.058) \\  
 $d=50$  & 0.059 (0.036) & 0.015 (0.008) & 0.171 (0.037) & 0.338 (0.049) &   0.863 (0.117) \\   
 \hline
\end{tabular}
\end{center}
\end{table}

In the Table 2 below, we report the mean convergence time over 100 replications for PPMM, RANDOM, SLICED, the refined auction algorithm (AUCTIONBF)\citep{bertsekas1992auction}, the revised simplex algorithm (REVSIM) \citep{luenberger1984linear} and the shortlist method (SHORTSIM) \citep{gottschlich2014shortlist}.\footnote{
AUCTIONBF, REVSIM and SHORTSIM are implemented by the R package ``transport'' \citep{transport}. }
Table 2 shows that the PPMM is the most computationally efficient method thanks to its cheap per iteration cost and fast convergence.

\iffalse
Auction algorithm is a standard perfect matching algorithm \citep{bertsekas1988auction} and is refined in \citep{bertsekas1992auction} by considering both forward and reverse auction.
Both REVSIM \citep{luenberger1984linear} and SHORTSIM \citep{gottschlich2014shortlist} are built based on simplex algorithm with various speed improvements.
\fi

\begin{table}[h]
\begin{center}
\caption{The mean convergence time (sec) for estimating the Wasserstein distance, with standard deviations presented in parentheses. The symbol ``-'' is inserted when the algorithm fails to converge.}\label{tab2}
\begin{tabular}{ c c c c c c c} 
 \hline
         & PPMM & RANDOM & SLICED(10) & AUCTIONBF & REVSIM & SHORTSIM\\ 
 \hline
 $d=10$  & 0.6 (0.1) & 4.8 (1.7) & 23.0 (2.6) & 99.7 (10.4) & 40.2 (4.0) & 42.5 (3.2)\\ 
 $d=20$  & 2.1 (0.3) &24.4 (3.2) & 230.2 (28.4)& 109.4 (12.5) & 42.6 (5.3) & 50.2 (6.6)\\  
 $d=50$  & 5.5 (0.4) & - & - & 125.5 (13.3)& 46.5 (5.6) & 56.5 (7.1)\\   
 \hline
\end{tabular}
\end{center}
\end{table}

\subsection{Application to generative models}

\begin{wrapfigure}[10]{r}{0.6\textwidth}
%\vspace{-0.6cm}
\includegraphics[width=0.6\textwidth]{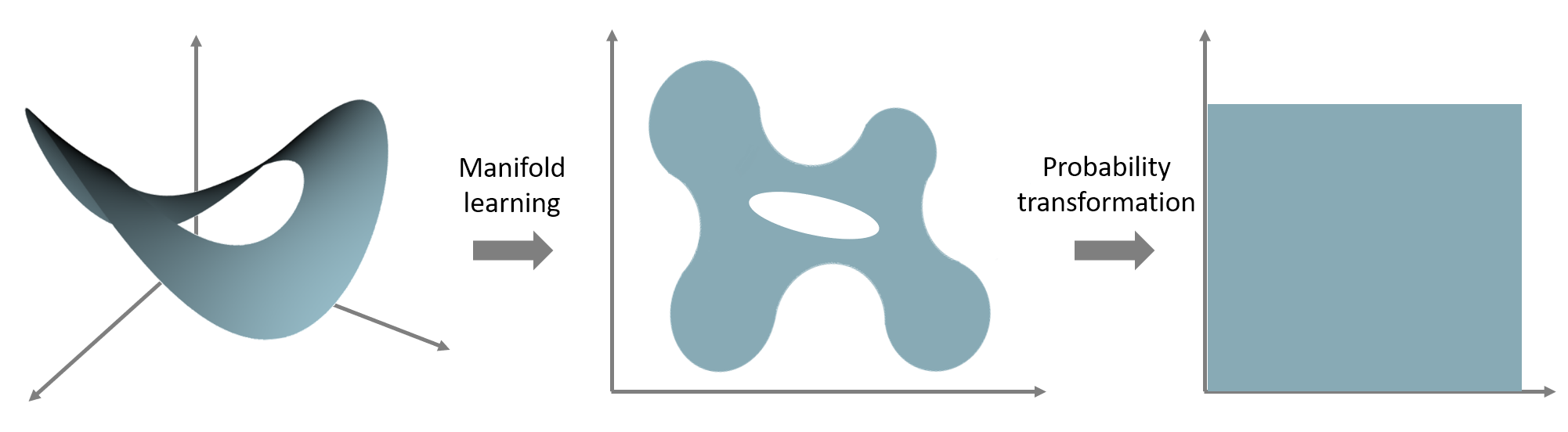}
\vspace{-0.5cm}
  \caption{Illustration for the generative model using manifold learning and optimal transport \label{GM}
  }
\end{wrapfigure}

A critical issue in generative models is the so-called mode collapse, i.e., the generated ``fake''  sample fails to capture some modes present in the training data \citep{guo2019mode,salimans2018improving}. 
To address this issue, recent studies  \citep{tolstikhin2017wasserstein,guo2019mode,kolouri2018sliced} incorporated generative models with the optimal transportation theory. 
As illustrated in Figure~\ref{GM}, one can decompose the problem of generating fake samples into two major steps: (1) manifold learning and (2) probability transformation.
The step (1) aims to discover the manifold structure of the training data by mapping the training data from the original space $\mathcal{X}\subset\mathbb{R}^d$ to a latent space $\mathcal{Z}\subset\mathbb{R}^{d^*}$ with $d^*\ll d$.
Notice that the probability distribution of the transformed data in $\mathcal{Z}$ may not be convex, leading to the problem of mode collapse.
The step (2) then addresses the mode collapse issue through transporting the distribution in $\mathcal{Z}$ to the uniform distribution $U([0,1]^{d^*})$.
Then, the generative model takes a random input from $U([0,1]^{d^*})$  and sequentially applies the inverse transformations in step (2) and step (1) to generate the output. In practice, one may implement the step (1) and (2) using variational autoencoders (VAE) and OTM, respectively. 
As we can see, the estimation of OTM plays an essential role in this framework.

In this subsection, we apply PPMM as well as RANDOM and SLICED to generative models to study two datasets: MINST and Google doodle dataset. For the SLICED method, we set the number of slices to be 10, 20, and 50. For all three methods, we set the number of iterations is set to be $10d^*$. We use the squared Euclidean distance as the cost for the VAE model.

\begin{wrapfigure}[12]{r}{0.6\textwidth}
%\vspace{-0.3cm}
\begin{tabular}{cc}
 \includegraphics[width=0.27\textwidth]{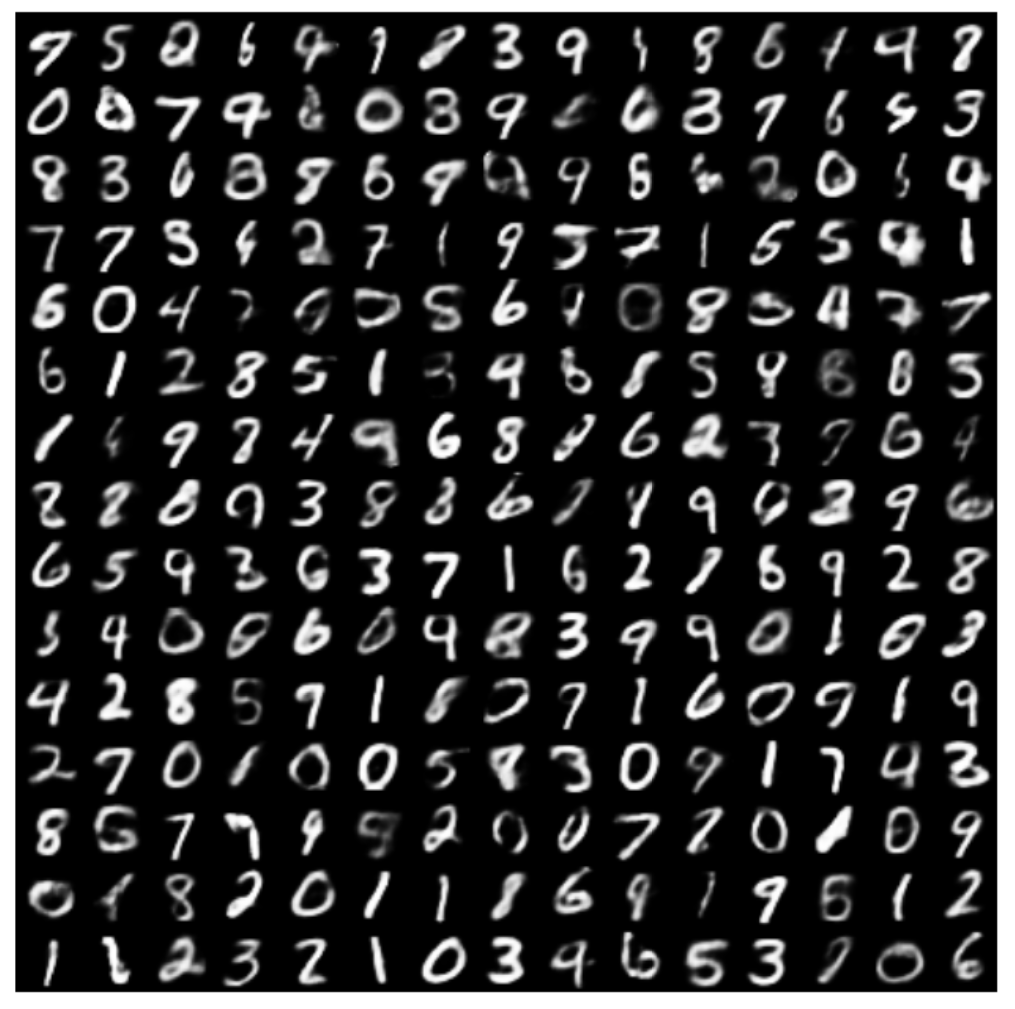}  & \includegraphics[width=0.27\textwidth]{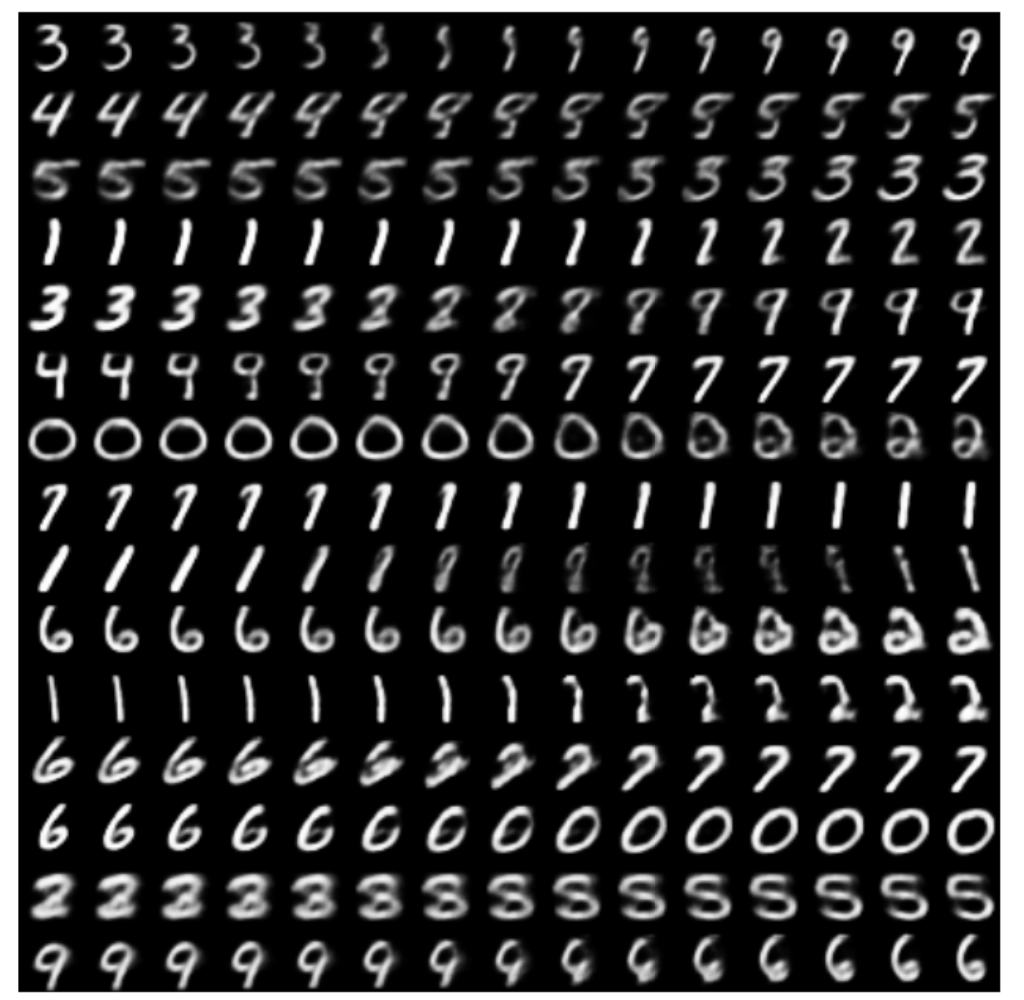} \\
   \end{tabular}
   \vspace{-0.4cm}
 \caption{Left: random samples generated by PPMM. Right: linear interpolation between random pairs of images.}\label{MNIST}
\end{wrapfigure}
\textbf{MNIST.} We first study the MNIST dataset, which contains 60,000 training images and 10,000 testing images of hand written digits. We pull each $28\times 28$ image to a $784$-dimensional vector and rescale the grayscale values from $[0,255]$ to $[0, 1]$.
Following the method in \citep{tolstikhin2017wasserstein}, we apply VAE to encode the data into a latent space $\mathcal{Z}$ of dimensionality $d^*=8$. Then, the OTM from the distribution in $\mathcal{Z}$ to $U([0,1]^{8})$ is estimated by PPMM as well as RANDOM and SLICED.

First, we visually examine the fake sample generated with PPMM. In the left-hand panel of Figure~\ref{MNIST}, we display some random images generated by PPMM.
The right-hand panel of Figure~\ref{MNIST} shows that PPMM can predict the continuous shift from one digit to another.
To be specific, let $\bm{a},\bm{b}\in\mathbb{R}^{784}$ be the sample of two digits (e.g. 3 and 9) in the testing set.
Let $T:\mathcal{X}\rightarrow\mathcal{Z}$ be the map induced by VAE and $\widehat{\phi}$ the OTM estimated by PPMM. Then, $\widehat{\phi}(T(\cdot))$ maps the sample distribution to $U([0,1]^{8})$. We linearly interpolate between $\widehat{\phi}(T(\bm{a}))$ and  $\widehat{\phi}(T(\bm{b}))$ with equal-size steps. Then we transform the interpolated points back to the sample distribution to generate the middle columns in the right panel of Figure~\ref{MNIST}.

We use the \textit{``Fr$\acute{e}$chet Inception Distance''} (FID)  \citep{heusel2017gans} to quantify the similarity between the generated fake sample and the training sample.
Specifically, we first generate 1,000 random inputs from $U([0,1]^{8})$.
We then apply PPMM, RANDOM, and SLICED to this input sample, yields the fake samples in the latent space $\mathcal{Z}$. 
Finally, we calculate the FID between the encoded training sample in the latent space and the generated fake samples, respectively.
A small value of FID indicates the generated fake sample is similar to the training sample and vice versa. 
The sample mean and sample standard deviation (in parentheses) of FID over 50 replications are presented in Table~\ref{tab3}.
Table~\ref{tab3} indicates PPMM significantly outperforms the other two methods in terms of estimating the OTM.

% \begin{figure}[ht]
%     \begin{center}
%         \begin{tabular}{cc}
%             \includegraphics[width=0.3\textwidth]{figure/MNIST1.PNG}  & \includegraphics[width=0.3\textwidth]{figure/MNIST2.png} \\
%         \end{tabular}
%         \caption{Left panel: random samples generated by the proposed method. Right panel: linear interpolation between random pairs of images.}\label{MNIST}
%     \end{center}
% \end{figure}

\begin{table}
\caption{The FID for the generated samples (lower the better), with standard deviations presented in parentheses}\label{tab3}
\begin{center}
\begin{tabular}{ c c c c c c } 
 \hline
         & PPMM & RANDOM & SLICED(10) & SLICED(20) & SLICED(50) \\
 \hline
 MNIST  & \textbf{0.17} (0.01) & 4.62 (0.02) & 2.98 (0.01) & 3.04 (0.01) & 3.12 (0.01) \\ 
 Doodle (face)  & \textbf{0.59} (0.09) & 8.78 (0.04)& 5.69 (0.01) & 6.01 (0.01) &  5.52 (0.01) \\  
 Doodle (cat)  & \textbf{0.24} (0.03) & 8.93 (0.03)& 5.99 (0.01)& 5.26 (0.01) &  5.33 (0.01) \\   
 Doodle (bird)  & \textbf{0.36} (0.03) & 7.81 (0.03)& 5.44 (0.01)& 5.50 (0.01) & 4.98 (0.01) \\  
 \hline
\end{tabular}
\end{center}
\end{table}

\textbf{Google doodle dataset.}
The Google Doodle dataset\footnote{\hyperlink{https://quickdraw.withgoogle.com/data}{https://quickdraw.withgoogle.com/data}} contains over 50 million drawings created by users with a mouse under 20 secs.
 We analyze a pre-processed version of this dataset from the quick draw Github account\footnote{https://github.com/googlecreativelab/quickdraw-dataset}.
In the dataset we use, the drawings are centered and rendered into $28\times 28$ grayscale images. 
We pull each $28\times 28$ image to a $784$-dimensional vector and rescale the grayscale values from $[0,255]$ to $[0, 1]$. In this experiment, we study the drawings from three different categories: smile face, cat, and bird. These three categories contain 161,666, 123,202, and 133,572 drawings, respectively.
Within each category, we randomly split the data into a training set and a validation set of equal sample sizes.
 
We apply VAE to the training set with a stopping criterion selected by the validation set. The dimension of the latent space is set to be 16.
Let $\bm{a},\bm{b}\in\mathbb{R}^{784}$ be two vectors in the validation set, $T:\mathcal{X}\rightarrow\mathcal{Z}$ be the map induced by VAE and $\widehat{\phi}$ be the OTM estimated by PPMM. 
Note that $\widehat{\phi}(T(\cdot))$ maps the sample distribution to $U([0,1]^{16})$.
We then linearly interpolate between $\widehat{\phi}(T(\bm{a}))$ and  $\widehat{\phi}(T(\bm{b}))$ with equal-size steps. 
The results are presented in Figure~\ref{Google}.

%We first predict the continuous shift between two categories using PPMM and linear interpolated samples in the validation set. The results are presented in Figure~\ref{Google}.

\begin{figure}[ht]
    \begin{center}
        \begin{tabular}{cc}
            \includegraphics[width=0.95\textwidth]{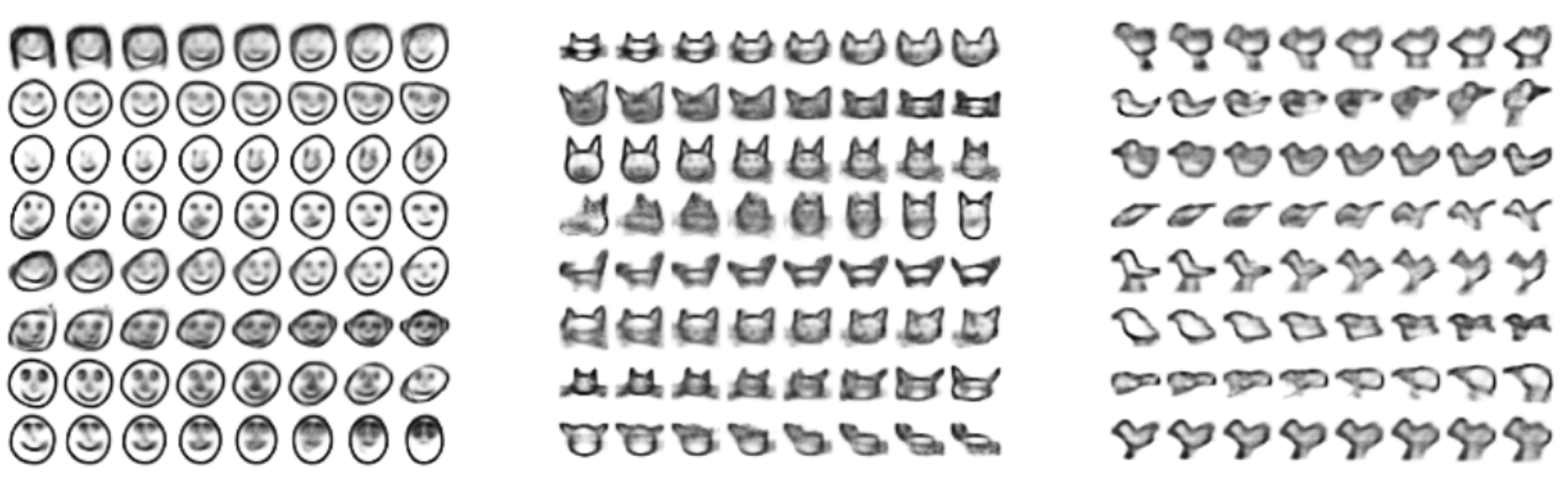}
        \end{tabular}
        \caption{Linear interpolation between random pairs of images from the dataset of smile face (left), cat (center), and bird (right).}\label{Google}
    \end{center}
    \vspace*{-0.2in}
\end{figure}

Then, we quantify the similarity between the generated fake samples and the truth by calculating the FID  in the latent space.
The sample mean and sample standard deviation (in parentheses) of FID over 50 replications are presented in Table~\ref{tab3}.
Again, the results in Table~\ref{tab3} justify the superior performance of PPMM over existing projection-based methods.

\section{Extensions}\label{sec:ext}

\begin{wrapfigure}[14]{r}{0.4\textwidth}
\vspace{-0.35cm}
\centering
\includegraphics[width=0.4\textwidth]{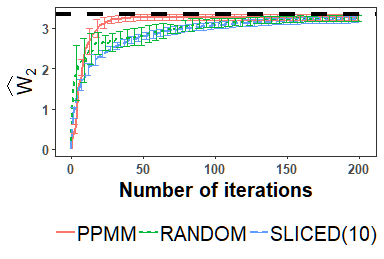}
\vspace{-25pt}
\caption{\footnotesize{Experiment for heterogeneous data with non-equal sample sizes. The black dashed line is the oracle calculated by SHORTSIM}}\label{plt::ext}
\end{wrapfigure}

First, the PPMM can be extended to address the penitential heterogeneous in the dataset by assigning non-equal weights to the points in source and target samples.  This is equivalent to calculate weighted variance-covariance matrices in Step 2 of Algorithm 1.
Second, the PPMM method can be modified to allow the sizes of the source and target samples to be different. In such a scenario, we can replace the look-up table in the Step (b) of Algorithm 2 with an approximate lookup table. Recall that the one-dimensional lookup table is just sorting, the one-dimensional approximate look-up table can be achieved by combining sorting and linear interpolation.
We validate the above extensions with a simulated experiment similar to the one in Section 4.1 except that we draw $5,000$ and $1,000$ points from $p_X$ and $p_Y$, respectively. We set $d=10$ and assign weights to the observations randomly. 
The estimation results are presented in Figure~\ref{plt::ext}.
In addition, the average convergence time is: PPMM(0.3s), RANDOM (1.4s), SLICED10 (14s) and SHORTSIM (74s).

\begin{wrapfigure}[10]{r}{0.4\textwidth}
\vspace{-0.35cm}
\centering
\includegraphics[width=0.4\textwidth]{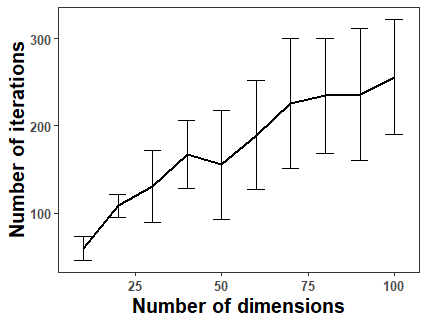}
\vspace{-25pt}
\caption{\footnotesize{Number of iterations to converge}}\label{plt::K}
\end{wrapfigure}

Theorem 3 suggests that, for the PPMM algorithm,  the number of iterations until converge, i.e.,  $K$, is on the order of dimensionality $d$. Here we use a simulated example to assess whether this order is attainable.
We follow a similar setting as in Section 4.1 except that we increase $d$ from 10 to 100 with a step size of 10. Besides, we set the termination criteria to be a hard threshold, i.e., $10^{-5}$. In Figure~\ref{plt::K}, we report the sample mean (solid line) and standard deviation (vertical bars) of $K$ over 100 replications with respect to the increased $d$. One can observe a clear linear pattern.

\iffalse
\red{\textbf{Improvements}:  (1) In Algorithm 1, the estimation accuracy of $\widehat{\Sigma}$ depends on the tail probability of $X$ and $Y$. Also, the OTM estimator in Algorithm 2 will be affected by asymmetry and outliers. So we expect the algorithms to perform best when $X$ and $Y$ follow symmetric and sub-exponential distributions (e.g., Gaussian). (2) To converge fast, we require the eigenvalues of $\Sigma$ to decay fast enough (approximately low rank). Hence, the first a few projection directions can explain the majority part of the variance of the discrepancy between two distributions.}

\fi

\subsection*{Acknowledgment}
We would like to thank Xiaoxiao Sun, Rui Xie, Xinlian Zhang, Yiwen Liu, and Xing Xin for many fruitful discussions. We would also like to thank Dr. Xianfeng David Gu for his insightful blog about the Optimal transportation theory. Also, we would like to thank the UC Irvine Machine Learning Repository for dataset assistance. This work was partially supported by National Science Foundation grants DMS-1440037, DMS-1440038, DMS-1438957, and NIH grants R01GM113242, R01GM122080.

\appendix
\section{Appendix}
This appendix provides the proofs of the theoretical
results for the main document.
% Challenging the Least  Squares: A New Perspective on Robustness. 
%The algorithms are given in \ref{algo}.
%The proofs of Proposition~1 and Theorem~1 are given in \ref{proof:Prop1} and \ref{proof:Thm1}, respectively. 

\subsection{Proof of Theorem 1}\label{pre}

First, we presents some Lemmas to facilitate the proof of Theorem 1.

Let $(\widetilde{Z}, \widetilde{R})$ be an independent copy of $(Z, R)$. We denote
\begin{equation}
	A(R,\widetilde{R})=E\left[ (Z-\widetilde{Z})(Z-\widetilde{Z})^{\T} | R, \widetilde{R}\right].    
\end{equation}
Let $P$ be the projection onto the central space $\cS_{R|Z}$ with respect to the inner project $a\cdot  b=a^{\T}b$, and let $Q=I_d - P$. Further, define two quantities
$$
C=2I_d - A(R,\widetilde{R}) \quad \text{and} \quad G=E(C)^2.
$$

\begin{lemma}\label{lem:spanG}
	Denote $\mathrm{span}(G)$ the column space of matrix $G$, then $\cS_{\text{SAVE}}=span(G)$.
\end{lemma}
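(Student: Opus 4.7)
The plan is to compute $G$ explicitly in terms of $V(R) := \Var(Z \mid R)$ and identify it, up to a positive scalar, with the SAVE matrix
\[
M_{\text{SAVE}} = E[(I_d - V(R))^{2}],
\]
whose column space is $\cS_{\text{SAVE}}$ by definition. The work is a direct moment calculation in three short steps.

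First, I would unpack $A(R,\widetilde R) = E[(Z-\widetilde Z)(Z-\widetilde Z)^{\T}\mid R,\widetilde R]$. Because $(Z,R)$ and $(\widetilde Z,\widetilde R)$ are independent copies, conditioning on $(R,\widetilde R)$ leaves $Z$ and $\widetilde Z$ conditionally independent. Writing $m(r) = E[Z\mid R=r]$ and $V(r) = \Var(Z\mid R=r)$ gives
\[
A(R,\widetilde R) = V(R) + V(\widetilde R) + (m(R)-m(\widetilde R))(m(R)-m(\widetilde R))^{\T}.
\]
Under the standing setup of the paper --- $E[X]=E[Y]=\mathbf{0}_d$ and $Z$ constructed by whitening the pooled variable $W$ --- the conditional mean $m(R)$ vanishes identically (this is precisely why the authors remark that a first-order SIR adjustment is needed when the means differ). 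So the mean-difference term drops out and
\[
C = 2I_d - A(R,\widetilde R) = (I_d - V(R)) + (I_d - V(\widetilde R)).
\]

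Second, I would compute $G = E[C^{2}]$ by expanding the square into four terms. Since $R$ and $\widetilde R$ are identically distributed, both diagonal terms equal $M_{\text{SAVE}}$. For the off-diagonal terms, independence of $R$ and $\widetilde R$ allows the expectation to factor as $(E[I_d - V(R)])^{2}$. The law of total variance applied to $Z$, combined with $m(R)\equiv \mathbf{0}$ and $\Var(Z)=I_d$, gives $E[V(R)] = I_d$, so both off-diagonal terms vanish. Hence
\[
G = 2\,E[(I_d - V(R))^{2}] = 2 M_{\text{SAVE}},
\]
and therefore $\mathrm{span}(G) = \mathrm{span}(M_{\text{SAVE}}) = \cS_{\text{SAVE}}$.

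The main obstacle is the step that sets $m(R) \equiv \mathbf{0}$. This is not a consequence of Assumption~1 but rather of the specific SAVE construction used in the paper, so I would need to carefully revisit the definitions of $W$ and $Z$ to verify that $E[Z\mid R]=\mathbf{0}$ for every realized value of $R$. If one dropped this and allowed $m(R)\not\equiv \mathbf{0}$, the expansion of $C^{2}$ would pick up additional terms involving $m(R)m(R)^{\T}$ and $\|m(R)-m(\widetilde R)\|^{2}(m(R)-m(\widetilde R))(m(R)-m(\widetilde R))^{\T}$, and one would then need to argue separately --- e.g., by appealing to Assumption~1 --- that the column spaces of these extra terms already lie in $\cS_{\text{SAVE}}$, which would make the argument substantially more delicate.
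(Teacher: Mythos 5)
Your computation of $A(R,\widetilde R)$ and the expansion of $E[C^2]$ are correct \emph{conditional on} $m(R)=E[Z\mid R]\equiv \mathbf{0}$, but that is exactly the step that fails, and you correctly flag it as the obstacle without resolving it. The paper's standing assumption $E[X]=E[Y]=\mathbf{0}_d$ only gives the unconditional statement $E[Z]=\mathbf{0}$, i.e.\ $E[m(R)]=\mathbf{0}$; it does not make the conditional mean vanish for every value of the response $R$, which in Theorem~1 is an arbitrary univariate continuous response (and in the intended application is the sample-membership label), so $m(R)$ is generically nonzero. Without $m(R)\equiv\mathbf{0}$ your identity $G=2M_{\text{SAVE}}$ is false: the off-diagonal terms no longer vanish because $E[V(R)]=I_d-\Var(m(R))\neq I_d$, and $C^2$ acquires cross terms involving $m(R)m(R)^{\T}$. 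So the proposal proves the lemma only in a special case and explicitly defers the general case, which is precisely where all the work lies.

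The paper's proof handles the general case. It invokes the expansion of $G=E[C^2]$ from Theorem~2 of Li and Wang (2007), which writes $G$ as a sum of positive semi-definite terms built from $E(ZZ^{\T}-I_d\mid R)$ and $E(Z\mid R)$, and then proves the two inclusions separately: if $v\perp\cS_{\text{SAVE}}$ then both $[\Var(Z\mid R)-I_d]v=0$ and $E(Z^{\T}\mid R)v=0$ almost surely (the first-moment, SIR-type directions are contained in the SAVE space), which kills every term of $Gv$ and yields $\mathrm{span}(G)\subseteq\cS_{\text{SAVE}}$; conversely, $v^{\T}Gv=0$ forces $E(Z^{\T}\mid R)v=0$ and then, after substituting $E(ZZ^{\T}\mid R)=\Var(Z\mid R)+E(Z\mid R)E(Z^{\T}\mid R)$, forces $v^{\T}E[\Var(Z\mid R)-I_d]^2v=0$, giving the reverse inclusion. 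To repair your argument you would either need to justify $E[Z\mid R]\equiv\mathbf{0}$ from the construction (it does not follow), or carry the $m(R)$ terms through the expansion and run the two-sided orthogonal-complement argument --- which is essentially the paper's proof.
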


\begin{proof}[Proof of Lemma \ref{lem:spanG}]
	Follow the Theorem 2 in \cite{li2007directional} and notice $E(ZZ^{\T})=I_d$, the matrix $G$ can be re-expressed as
	\iffalse
	\$
	G= & 2E\left[E^2(ZZ^{\T}|R)\right]+2E^2\left[ E(Z|R)E(Z^{\T}|R) \right] 
	\\
	& + 2E\left[E(Z^{\T}|R) E(Z|R)\right]E\left[ E(Z|R) E(Z^{\T}|R)\right]- 2I_d.
	\$
	\fi
	\$
	G= & 2E\left[E^2(ZZ^{\T}-I_d|R)\right]+2E^2\left[ E(Z|R)E(Z^{\T}|R) \right] 
	\\
	& + 2E\left[E(Z^{\T}|R) E(Z|R)\right]E\left[ E(Z|R) E(Z^{\T}|R)\right].
	\$

	First, let $v$ be a vector orthogonal to $\cS_{\text{SAVE}}$. We have $E(Z^{\T}|R)v=0$ and $[I_d -\var(Z|R)]v=0$ almost surely. Therefore, $G_iv=0$ for $i=1, \ldots , 6$. This implies that $v$ is orthogonal to $\mathrm{span}(G)$, and hence $\mathrm{span}(G) \subseteq \cS_{\text{SAVE}}$.
	
	On the other hand, let $v$ be a vector orthogonal to $\mathrm{span}(G)$. Then,  $v^{\T}Gv=0$ implies
	\begin{equation}\label{eq:G_exp1}
		v^{\T}E\left[E^2(ZZ^{\T}-I_d|R)\right]v=0
	\end{equation}
	and
	\begin{equation}\label{eq:G_exp2}
		v^{\T}E\left[E(Z^{\T}|R) E(Z|R)\right]E\left[ E(Z|R) E(Z^{\T}|R)\right]v=0,
	\end{equation}
	almost surely.
	
	The second equality implies that $E(Z^{\T}|R)=0$ almost surely. Furthermore, Using the fact that $E(ZZ^{\T})=I_d$ and $E(ZZ^{\T}|R)=\var(Z|R)+E(Z|R)E(Z^{\T}|Y)$, the first inequality can be re-expressed as
	\$
	0= & v^{\T}E\left[\var(Z|R) - I_d \right]^2 v
	\\
	+ & v^{\T}E\left[(\var(Z|R) - I_p ) E(Z|R) E(Z^{\T}|R)\right] v
	\\
	+ & v^{\T}E\left[E(Z|R)E(Z^{\T}|R) (\var(Z|R)-I_d)\right]v
	\\
	+ & v^{\T}E\left[ E(Z|R) E(Z^{\T}|R)   \right]^2v.
	\$
	The second to fourth terms are 0 since $E(Z^{\T}|R)=0$. Thus the first term must also be 0, almost surely, implying .
	that $v \independent \cS_{\text{SAVE}}$. We complete the proof by showing that $ \cS_{\text{SAVE}} \subseteq \mathrm{span}(G)$.

	\iffalse
	\$
	G= & 2E\left[\var(Z|R) - I_d \right]^2 
	\\
	+ & 2E\left[(\var(Z|R) - I_p ) E(Z|R) E(Z^{\T}|R)\right]
	\\
	+ & 2E\left[E(Z|R)E(Z^{\T}|R) (\var(Z|R)-I_d)\right]
	\\
	+ & 2E\left[ E(Z|R) E(Z^{\T}|R)   \right]^2
	\\
	+ & 2E^2\left[ E(Z|R) E(Z^{\T}|R)   \right]
	\\
	+ & 2E\left[ E(Z|R) E(Z^{\T}|R)\right]E\left[ E(Z^{\T}|R) E(Z|R)\right]
	\\
	:= & \sum\limits_{i=1}^6 G_i.
	\$
	\fi
	
\end{proof}

\begin{lemma}\label{lem:lem2}
	Suppose the Assumption 1 (a) and (b) hold. Denote $\mathrm{span}(G)$ the column space of matrix $G$, then $\cS_{\text{SAVE}}=span(G)$.
\end{lemma}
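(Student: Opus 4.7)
\textbf{Proof proposal for Lemma~\ref{lem:lem2}.}
My plan is to prove the two inclusions $\mathrm{span}(G)\subseteq \cS_{\mathrm{SAVE}}$ and $\cS_{\mathrm{SAVE}}\subseteq \mathrm{span}(G)$ separately, leveraging the explicit decomposition of $G$ obtained in the proof of Lemma~\ref{lem:spanG}. The guiding idea is that the linearity condition (Assumption~1(a)) and the constant conditional variance condition (Assumption~1(b)) force the ``mean'' components $E(Z\mid R)$ appearing in $G$ to live in the same subspace as the ``variance'' component $\Var(Z\mid R)-I_d$ that generates $\cS_{\mathrm{SAVE}}$, so that the two column spaces must coincide.

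First I would start from the identity for $G$ derived in Lemma~\ref{lem:spanG},
\[
G \;=\; 2E\bigl[E^{2}(ZZ^{\T}-I_d\mid R)\bigr]\;+\;2E^{2}\bigl[E(Z\mid R)E(Z^{\T}\mid R)\bigr]\;+\;2E\bigl[E(Z^{\T}\mid R)E(Z\mid R)\bigr]\,E\bigl[E(Z\mid R)E(Z^{\T}\mid R)\bigr].
\]
Using $E(ZZ^{\T}\mid R)=\Var(Z\mid R)+E(Z\mid R)E(Z^{\T}\mid R)$, I would rewrite the first term so that the matrix $\Var(Z\mid R)-I_d$ appears explicitly. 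This gives a decomposition $G = G_{V}+G_{M}+G_{\mathrm{cross}}$, where $G_{V}=2E[(\Var(Z\mid R)-I_d)^{2}]$ is (twice) the SAVE matrix, $G_{M}$ collects purely mean-based outer products built from $E(Z\mid R)E(Z^{\T}\mid R)$, and $G_{\mathrm{cross}}$ is the symmetric cross term.

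For the inclusion $\cS_{\mathrm{SAVE}}\subseteq \mathrm{span}(G)$ I would take any $v\perp \mathrm{span}(G)$, so that $v^{\T}Gv=0$. Since $G_{V}$, $G_{M}$, and the remaining outer-product piece are all positive semidefinite (the cross term can be grouped with $G_{M}$ to realize the full $G$ as a sum of squares via the $2I_d-A(R,\widetilde R)$ representation), each summand must annihilate $v$. In particular $v^{\T}G_{V}v=0$, forcing $v\perp \cS_{\mathrm{SAVE}}$.

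For the reverse inclusion $\mathrm{span}(G)\subseteq \cS_{\mathrm{SAVE}}$ I would take $v\perp \cS_{\mathrm{SAVE}}$ and show $Gv=0$. Writing $v=Qv+Pv$ with $P$ the projection onto $\cS_{R\mid Z}$, Assumption~1(a) ensures that the off-central component of $Z$ is a linear function of $PZ$, so products like $E(Z\mid R)E(Z^{\T}\mid R)v$ live entirely inside the central space; Assumption~1(b) then guarantees $\Var(Qv^{\T}Z\mid PZ)$ is a constant, which is what allows the SAVE matrix to capture the same directions as the mean-based pieces of $G$. Combining these, any $v$ orthogonal to $\cS_{\mathrm{SAVE}}$ will kill $G_{V}$, $G_{M}$, and $G_{\mathrm{cross}}$ simultaneously, giving $Gv=0$.

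The main obstacle I expect is the handling of the cross term $G_{\mathrm{cross}}$. The cleanest way around this is to avoid expanding $G$ component-wise and instead revert to the compact form $G=E[(2I_d-A(R,\widetilde R))^{2}]$: for any test vector $v$, write $v^{\T}Gv=E[\|(2I_d-A(R,\widetilde R))v\|^{2}]$, which is a sum of nonnegative quantities. Combined with the linearity/constant-variance conditions, this sum-of-squares representation yields the equivalence $v^{\T}Gv=0\iff v\perp \cS_{\mathrm{SAVE}}$, closing both inclusions simultaneously and completing the proof.
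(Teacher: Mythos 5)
There is a genuine mismatch between what you prove and what the paper's proof of this lemma actually establishes. The printed statement of Lemma~2 is a verbatim duplicate of Lemma~1 ($\cS_{\text{SAVE}}=\mathrm{span}(G)$) --- evidently a typo --- and the paper's own proof, together with the way the lemma is invoked in the proof of Theorem~1 (``Lemma 2 shows $\mathrm{span}(G)\subseteq\cS_{R|W}$ under Assumption 1 (a) and (b)''), makes clear that the intended content is the unbiasedness inclusion $\mathrm{span}(G)\subseteq\cS_{R|Z}$, i.e.\ that the candidate matrix lies inside the \emph{central subspace}. That is precisely where Assumptions 1(a) and 1(b) do real work, and it is entirely absent from your proposal. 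The paper's argument takes $v$ orthogonal to the central subspace, uses the linearity condition (a) to write $E(v^{\T}Z\mid PZ)=\alpha^{\T}PZ$ and the identity $E(ZZ^{\T})=I_d$ to force $\alpha^{\T}P\alpha=0$, hence $E(v^{\T}Z\mid PZ)=0$; uses the constant-conditional-variance condition (b) to get $E[(v^{\T}Z)^2\mid PZ]=v^{\T}v$; passes these through $R\independent Z\mid PZ$ to obtain $E(v^{\T}Z\mid R)=0$ and $E[(v^{\T}Z)^2\mid R]=v^{\T}v$; and substitutes into the expansion of $A(R,\widetilde R)$ to conclude $v^{\T}A(R,\widetilde R)v=2v^{\T}v$, whence $v^{\T}Gv=0$. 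None of this machinery appears in your write-up: you instead re-derive Lemma~1, for which Assumption~1 is not needed at all, so your proof does not deliver the statement the paper actually relies on here.

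Even read on its own terms, the second half of your argument conflates the two subspaces. To show $\mathrm{span}(G)\subseteq\cS_{\text{SAVE}}$ you take $v\perp\cS_{\text{SAVE}}$ but then decompose $v=Pv+Qv$ with $P$ the projection onto the central subspace $\cS_{R|Z}$; orthogonality to $\cS_{\text{SAVE}}$ does not give you $Pv=0$, and the observation that the mean-based pieces of $G$ ``live entirely inside the central space'' would at best show $Gv\in\cS_{R|Z}$, not $Gv=0$. The correct route for that inclusion (as in the paper's Lemma~1) is to note that $v\perp\cS_{\text{SAVE}}$ gives $[\Var(Z\mid R)-I_d]v=0$ and $E(Z^{\T}\mid R)v=0$ almost surely, which annihilates every term of the expansion of $G$ directly, with no appeal to Assumption~1 or to $P$. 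I would suggest restating the lemma as $\mathrm{span}(G)\subseteq\cS_{R|Z}$ and rebuilding the proof around the linearity and constant-variance conditions as sketched above.
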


\begin{proof}[Proof of Lemma \ref{lem:lem2}]
	By Lemma 2.1 of  \cite{li2005contour} and Propsition 4.6 of \cite{cook2009regression}, 
	$(Z, R) \independent (\widetilde{Z}, \widetilde{R})$  implies that 
	$Z \independent \widetilde{Z}(R,\widetilde{R})$, $Z \independent \widetilde{R}|R$ 
	and $\widetilde{Z} \independent R|\widetilde{R}$. Thus $A(R, \widetilde{R})$ can be re-expressed as
	\#\label{eq:ARR}
	A(R, \widetilde{R})= & E(ZZ^{\T}|R)-E(Z|R)E(\widetilde{Z}^{\T}|\widetilde{R}) \nonumber
	\\
	- & E(\widetilde{Z}|\widetilde{R})E(Z^{\T}|R)+E(\widetilde{Z}\widetilde{Z}^{\T}|\widetilde{R}))
	\#
	
	Let $v$ be a vector orthogonal to $\cS_{R|W}$. By assumption (a), $E(v^{\T}Z|PZ)=\alpha^{\T}PZ$ for some $\alpha \in \RR^d$. Multiply both sides by $ZP\alpha$ and then take unconditional expectation to obtain $v^{\T}P\alpha=\alpha^{\T}P\alpha=0$. Thus $E(v^{\T}Z|PZ)=0$.
	
	By Assumption 1 (a) and (b), $E\left[ (v^{\T}Z)^2 | PZ\right]=c+E^2(v^{\T}Z|PZ)=c$, for some constant $c$. Take unconditional expectations on both sides to obtain $c=v^{\T}v$. Thus $E\left[ (v^{\T}Z )^2 | PZ\right]=v^{\T}v$.
	
	Because $R \independent Z|PZ$, we have
	\$
	E(v^{\T}Z|R) & =E\left[E(v^{\T}Z|PZ | R)\right]=0,
	\\
	E\left[ (v^{\T}Z)^2 |R \right] & = E\left\{  E\big[ (v^{\T}Z )^2 | PZ\big] |R  \right\}=v^{\T}v.
	\$
	
	Substitute the above two lines into \ref{eq:ARR}, we have
	\$
	v^{\T}A(R,\widetilde{R})v=2v^{\T}v,
	\$
	which implies $v^{\T}Gv=0$. Then, we have $\mathrm{span}(G) \subseteq \cS_{R|W}$.
	
\end{proof}

\begin{lemma}\label{lem:lem3}
	Let $G$ be a symmetric and positive semi-definite matrix which satisfies $\mathrm{span}(G) \subseteq \cS_{R|W}$. Then, $\mathrm{span}(G) = \cS_{R|W}$ iff $v^{\T}Gv>0$ for all $v \in \cS_{R|W}$, $v\neq 0$.
\end{lemma}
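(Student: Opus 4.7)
The plan is a short spectral-theory argument that leverages the hypothesis that $G$ is symmetric positive semi-definite; the key identity at play is $\mathrm{span}(G) = \ker(G)^\perp$, which holds for any symmetric matrix. I will prove the two directions of the equivalence separately, each reduced to the spectral decomposition of $G$.

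For the forward direction, suppose $\mathrm{span}(G) = \cS_{R|W}$ and fix a nonzero $v \in \cS_{R|W}$. Using the spectral decomposition $G = U D U^{\T}$ with $D = \mathrm{diag}(d_1, \ldots, d_d)$ and $d_i \geq 0$, write $v = U c$ where $c = U^{\T} v$. Membership $v \in \mathrm{span}(G) = \ker(G)^\perp$ forces $c_i = 0$ whenever $d_i = 0$. Hence $v^{\T} G v = c^{\T} D c = \sum_{d_i > 0} d_i c_i^2$, and this is strictly positive because $v \neq 0$ guarantees some $c_i \neq 0$ (with corresponding $d_i > 0$).

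For the reverse direction, assume $v^{\T} G v > 0$ for every nonzero $v \in \cS_{R|W}$, in addition to the given inclusion $\mathrm{span}(G) \subseteq \cS_{R|W}$. I would argue by contradiction: if the inclusion were strict, then the orthogonal complement of $\mathrm{span}(G)$ inside $\cS_{R|W}$, namely $\cS_{R|W} \cap \mathrm{span}(G)^\perp$, would be non-trivial. Picking a nonzero $v$ in this intersection, one has $v \perp \mathrm{span}(G) = \ker(G)^\perp$, forcing $v \in \ker(G)$ and hence $Gv = 0$ and $v^{\T} G v = 0$, contradicting the hypothesis. Therefore $\mathrm{span}(G) = \cS_{R|W}$.

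I do not expect a substantive obstacle here: the proof is a routine application of the spectral theorem. The only point requiring care is to invoke both the symmetry and the positive semi-definiteness of $G$ at the right moments, since for a general non-symmetric matrix the identification $\mathrm{span}(G) = \ker(G)^\perp$ fails, and without positive semi-definiteness the quadratic form $v^{\T} G v$ can vanish even for $v \notin \ker(G)$.
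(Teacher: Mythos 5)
Your proof is correct and follows essentially the same route as the paper: the paper likewise takes a nonzero $v \in \cS_{R|W} \ominus \mathrm{span}(G)$ to get $v^{\T}Gv=0$ in one direction, and notes that $v \in \mathrm{span}(G)$, $v \neq 0$ forces $v^{\T}Gv>0$ in the other. You simply spell out the spectral-decomposition details that the paper leaves implicit.
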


\begin{proof}[Proof of Lemma \ref{lem:lem3}]
	Suppose that $\mathrm{span}(G)$ is a strict subspace of $\cS_{R|W}$. Then $v^{\T}Gv=0$ for any $v\neq 0$, $v \in \cS_{R|W} \ominus \mathrm{span}(G)$. Conversely, for $\mathrm{span}(G) = \cS_{R|W}$, $v \in \cS_{R|W}$, $ v\neq 0$, we have $v\in \mathrm{span}(G)$, and hence $v^{\T}Gv >0$.
\end{proof}

%\subsection{Proof of Theorem 1}\label{proof_sec3}

\begin{proof}[{\bf Proof of Theorem 1}]
	We first show that $\mathrm{span}(G)= \cS_{R|W}$. $G$ is symmetric and positive semi-definite according to its definition. Also, Lemma \ref{lem:lem2} shows $\mathrm{span}(G) \subseteq \cS_{R|W}$ under Assumption 1 (a) and (b).
	
	Let $v \in \cS_{R|W}$, $v\neq 0$. Without loss of generality, we assume $\|v\|=1$. Then
	\#\label{eq:proof_thm1_1}
	v^{\T}Gv=v^{\T} E\left[C (I_d - vv^{\T}) C\right] v + E\left[ (v^{\T} C v)^2\right].
	\# 
	
	Because $I_d-vv^{\T}\geq 0$, the first term on the right hand side of \eqref{eq:proof_thm1_1} is nonnegative. By Assumption 1 (c), $v^{\T}A(R, \widetilde{R})v$ is non-degenerate. Therefore, $v^{\T}Cv$ is non-degenerate. Then, by Jensen's inequality and notice $E(C)=0$,
	\#\label{eq:proof_thm1_2}
	E\left[ (v^{\T} C v)^2\right] > \left[ E(v^{\T} Cv)\right]^2 =0.
	\# 
	
	Then, by Lemma \ref{lem:spanG} and Lemma \ref{lem:lem3}, we complete the proof by showing $\cS_{\text{SAVE}}=\mathrm{span}(G)= \cS_{R|W}$. 
	
\end{proof}

\subsection{Proof of Theorem 2}
\iffalse
Let $\widehat{\Sigma}_1$ and $\widehat{\Sigma}_2$ be the sample covariance matrix estimator of $\Sigma_1$ and $\Sigma_2$, respectively. Denote 
\$
\Sigma_{\mathrm{SAVE}}=\frac{1}{4}\left[ (\Sigma_1-I_d)^2+(\Sigma_2-I_d)^2\right]
\quad \text{and} \quad 
\widehat{\Sigma}_{\mathrm{SAVE}}=\frac{1}{4}\left[ (\widehat{\Sigma}_1-I_d)^2+(\widehat{\Sigma}_2-I_d)^2\right].
\$ 
Denote $\bxi_1$ and $\widehat{\bxi}_1$ the eigenvectors correspond to the largest eigenvalues of $\Sigma_{\mathrm{SAVE}}$ and $\widehat{\Sigma}_{\mathrm{SAVE}}$, respectively. Further, denote $r=\mathrm{Rank}(\Sigma_{\mathrm{SAVE}})$, the rank of $\Sigma_{\mathrm{SAVE}}$.

\begin{theorem}\label{thm:thm2}
	Under Assumptions 2, the SAVE estimator of  most "informative" projection direction satisfies,
	\$
	\|\widehat{\bxi}_1 - \bxi_1 \|_\infty =O_p (r^4\sqrt{\frac{\log d}{n}} + r^4 \sqrt{d}\frac{\log d}{n}),
	\$
	as $n, d \rightarrow \infty$.
\end{theorem}
\fi

\begin{proof}[{\bf Proof of Theorem 2}]
	Suppose Assumption 2 holds. By applying Theorem 3 and Proposition 3 in \cite{fan2018eigenvector}, we arrive at
	\#\label{eq:infty_bound}
	\|\widehat{\bxi}_1 - \bxi_1 \|_\infty & \leq \max\limits_{1\leq l \leq r} \|\widehat{\bxi}_l - \bxi_l \|_\infty \nonumber
	\\
	& \leq C_1 d^{-3/2}(r^4 \| \widehat{\Sigma}_{\mathrm{SAVE}} - \Sigma_{\mathrm{SAVE}}\|_\infty
	+ r^{3/2}\| \widehat{\Sigma}_{\mathrm{SAVE}} - \Sigma_{\mathrm{SAVE}}\|_2) \nonumber
	\\
	& \leq C_2 r^4 d^{-1/2}\| \widehat{\Sigma}_{\mathrm{SAVE}} - \Sigma_{\mathrm{SAVE}}\|_{\mathrm{max}},
	\#
	where $C_1$ and $C_2$ are some positive constants.
	
	It can be shown that
	\$
	& \widehat{\Sigma}_{\mathrm{SAVE}} - \Sigma_{\mathrm{SAVE}} 
	\\
	& = 
	\frac{1}{4}\left[ (\widehat{\Sigma}_1-I_d)^2 - (\Sigma_1-I_d)^2 
	+ (\widehat{\Sigma}_2-I_d)^2 - (\Sigma_2-I_d)^2\right]
	\\
	& = \frac{1}{4}\left[ (\widehat{\Sigma}_1 + \Sigma_1- 2I_d)(\widehat{\Sigma}_1-\Sigma_1)
	+ (\widehat{\Sigma}_2 + \Sigma_2- 2I_d)(\widehat{\Sigma}_2-\Sigma_2) \right]
	\$
	
	Then, 
	\#\label{eq:save_max}
	& \| \widehat{\Sigma}_{\mathrm{SAVE}} - \Sigma_{\mathrm{SAVE}}\|_{\mathrm{max}}
	\nonumber\\
	& \leq \frac{1}{4}\left[\ \|(\widehat{\Sigma}_1 + \Sigma_1- 2I_d)(\widehat{\Sigma}_1-\Sigma_1)\|_{\mathrm{max}}
	+ \|(\widehat{\Sigma}_2 + \Sigma_2- 2I_d)(\widehat{\Sigma}_2-\Sigma_2)\|_{\mathrm{max}} \right]
	\nonumber\\
	& \leq \frac{1}{4}\left[ \|\widehat{\Sigma}_1 + \Sigma_1- 2I_d\|_2
	\|\widehat{\Sigma}_1-\Sigma_1\|_{\mathrm{max}}
	+ \|\widehat{\Sigma}_2 + \Sigma_2- 2I_d\|_2
	\|\widehat{\Sigma}_2-\Sigma_2\|_{\mathrm{max}}  \right]
	\#
	
	Follow the classic asymptotic result in univariate OLS and use the union bound, we have
	\#\label{eq:sigma_max}
	\|\widehat{\Sigma}_1-\Sigma_1\|_{\mathrm{max}}=O_p(\sqrt{\frac{\log d}{n}})
	\quad \mathrm{and} \quad
	\|\widehat{\Sigma}_2-\Sigma_2\|_{\mathrm{max}}=O_p(\sqrt{\frac{\log d}{n}}).
	\#
	
	Then, we bound the first operator norm in \eqref{eq:save_max} as
	\#\label{eq:sigma_1_op}
	& \|\widehat{\Sigma}_1 + \Sigma_1- 2I_d\|_2
	\nonumber\\
	& = \|\widehat{\Sigma}_1- \Sigma_1 + 2\Sigma_1- 2I_d\|_2
	\nonumber\\
	& \leq \|\widehat{\Sigma}_1- \Sigma_1\|_2 + 2\|\Sigma_1- I_d\|_2
	\nonumber\\
	& \leq d\|\widehat{\Sigma}_1- \Sigma_1\|_{\mathrm{max}} + 2\|\Sigma_1- I_d\|_2
	\nonumber\\
	& = O_p(\sqrt{\frac{d^2\log d}{n}}) + O_p(\sqrt{d}),
	\#
	where the second term of the last equality is due to $\| \Sigma_1\|_2 = O_p (\sqrt{d})$ derived from Assumption 2. Similarly, we have
	\#\label{eq:sigma_2_op}
	\|\widehat{\Sigma}_2 + \Sigma_2- 2I_d\|_2= O_p(\sqrt{\frac{d^2\log d}{n}} + \sqrt{d}).
	\#
	
	By plugging \eqref{eq:sigma_max}, \eqref{eq:sigma_1_op} and \eqref{eq:sigma_2_op} back to \eqref{eq:infty_bound}, we conclude the proof by showing
	\$
	\|\widehat{\bxi}_1 - \bxi_1 \|_\infty = O_p (r^4\sqrt{\frac{\log d}{n}} + r^4 \sqrt{d}\frac{\log d}{n}).
	\$
	
\end{proof}

\subsection{Proof of Theorem 3}
We will work on the space of probability measures on $X \subset \RR^d$ with bounded $p$th moment, i.e.
$$
{\cal P}_p(X) \equiv \left\{ \mu \in {\cal P}(X):\inf_X |x|^p \mathrm{d}\mu(x) < \infty  \right\}.
$$

The following Lemma follows the Theorem 5.10 in \cite{santambrogio2015optimal}, which provides the weak convergence in Wasserstein distance. Hence we omit its proof.

\begin{lemma}\label{lem:4}
	Let $X \subset \RR^d$ be compact, and $\mu_n, \mu \in {\cal P}(X)$. Then $\mu_n \rightarrow \mu$ if and only if $W_p (\mu_n, \mu) \rightarrow 0$.
\end{lemma}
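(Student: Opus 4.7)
The plan is to establish the two implications separately. For the forward direction, that $W_p(\mu_n,\mu)\to 0$ implies weak convergence $\mu_n\to\mu$, I would first reduce to the case $p=1$: for any coupling $\pi$ of $\mu_n$ and $\mu$, Jensen's inequality applied to the probability measure $\pi$ yields $\int|x-y|\,d\pi\le(\int|x-y|^p\,d\pi)^{1/p}$, hence $W_1(\mu_n,\mu)\le W_p(\mu_n,\mu)$. Then for any $L$-Lipschitz bounded $f:X\to\RR$ and any coupling $\pi_n$ between $\mu_n$ and $\mu$,
\[
\Big|\int f\,d\mu_n-\int f\,d\mu\Big|=\Big|\int[f(x)-f(y)]\,d\pi_n(x,y)\Big|\le L\int|x-y|\,d\pi_n,
\]
and minimizing over $\pi_n$ gives $|\int f\,d\mu_n-\int f\,d\mu|\le L\,W_1(\mu_n,\mu)\to 0$. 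Since $X$ is compact, the bounded Lipschitz functions are dense in $C(X)$ under the sup norm, so by Portmanteau's theorem this extends to all $f\in C_b(X)$ and delivers weak convergence.

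For the reverse direction, I would appeal to the Skorokhod representation theorem. Because $X\subset\RR^d$ is a Polish space and $\mu_n\to\mu$ weakly, there exist random variables $Z_n\sim\mu_n$ and $Z\sim\mu$ on a common probability space with $Z_n\to Z$ almost surely. The joint law of $(Z_n,Z)$ is a coupling of $\mu_n$ and $\mu$, so by definition of the Wasserstein distance,
\[
W_p^p(\mu_n,\mu)\le E|Z_n-Z|^p.
\]
By compactness of $X$, $|Z_n-Z|^p\le(2\,\mathrm{diam}(X))^p$, so dominated convergence gives $E|Z_n-Z|^p\to 0$, which finishes the argument.

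The main obstacle, such as it is, lies in the reverse direction: almost sure convergence of the Skorokhod couplings must be upgraded to convergence of $p$th moments. Compactness of $X$ does all the work by supplying a uniform $L^\infty$ envelope that activates the dominated convergence theorem; without compactness one would need to argue uniform $p$th-moment integrability of the $\mu_n$, which is the usual stumbling block in the non-compact formulation. The forward direction is essentially elementary, resting only on Jensen's inequality, the Kantorovich-Rubinstein identification of $W_1$ with the Lipschitz test-function functional, and density of Lipschitz functions in $C(X)$ on a compact domain.
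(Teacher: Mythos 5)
Your proof is correct, but note that the paper does not actually prove this lemma at all: it simply cites Theorem 5.10 of Santambrogio (2015) and omits the argument. Your write-up therefore supplies a self-contained proof where the paper defers to the literature. Compared with the standard textbook route behind the cited theorem --- which reduces both directions to $p=1$ by sandwiching $W_1 \le W_p$ and $W_p^p \le \mathrm{diam}(X)^{p-1} W_1$ on a compact set and then invokes Kantorovich--Rubinstein duality in both directions --- your forward direction is essentially the same (Jensen plus Lipschitz test functions plus density of Lipschitz functions in $C(X)$), while your reverse direction takes a genuinely different path via the Skorokhod representation theorem and dominated convergence. That substitution is perfectly valid on a compact $X \subset \RR^d$ (a Polish space), and it has the pedagogical advantage of isolating exactly where compactness enters: it provides the uniform envelope $|Z_n - Z|^p \le \mathrm{diam}(X)^p$ that licenses dominated convergence, whereas in the non-compact setting one would instead need uniform integrability of $p$th moments. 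The only cosmetic remarks are that the constant $(2\,\mathrm{diam}(X))^p$ can be sharpened to $\mathrm{diam}(X)^p$ since both points lie in $X$, and that the appeal to ``Portmanteau'' is unnecessary --- once you control a sup-norm-dense subclass of $C(X)$, an $\varepsilon/3$ argument already yields convergence against all of $C(X) = C_b(X)$.
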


Denote 
$
\widehat{W}^*_p(\X, \Y)= \left(\frac{1}{n} \sum\limits_{i=1}^n \|\x_i- {\phi}^*(\x_i) \|^p  \right)^{1/p}, \ 
$
the empirical Wasserstein distance with true OTM $\phi^*(\cdot)$. The following Lemma follows the Theorem 2.1 in \cite{klein2017convergence} guarantees that $\widehat{W}^*_p(\X, \Y)$ is a consistent estimator of $W_2(p_x,p_y)$. We refer to \cite{klein2017convergence} for its proof.

\begin{lemma}\label{lem:5}
	Under Assumption 2 (a) and (b),  $\widehat{W}^*_p(\X, \Y)$ converges almost surely to $W_2(p_x,p_y)$ as $n \rightarrow \infty$.
\end{lemma}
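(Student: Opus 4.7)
The plan is to recognize that $\widehat{W}^*_p(\X,\Y)^p$ is a sample average of i.i.d.\ random variables and invoke the Strong Law of Large Numbers, rather than the heavier empirical-process machinery needed when the transport plan itself must be estimated. The key observation is that $\phi^*$ is a fixed, deterministic map, so plugging in $\{\x_i\}$ produces i.i.d.\ summands even though the general empirical Wasserstein problem requires a plug-in OTM.

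First I would rewrite
\[
\widehat{W}^*_p(\X,\Y)^p \;=\; \frac{1}{n}\sum_{i=1}^n \|\x_i - \phi^*(\x_i)\|^p \;=:\; \frac{1}{n}\sum_{i=1}^n T_i,
\]
with $T_i$ i.i.d.\ copies of $T := \|X - \phi^*(X)\|^p$. Since $\phi^*$ is the Monge map from $p_X$ to $p_Y$, the defining identity for the Wasserstein distance gives
\[
E[T] \;=\; \int_{\RR^d}\|x-\phi^*(x)\|^p\, dp_X(x) \;=\; W_p(p_X,p_Y)^p.
\]

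Next I would verify the integrability $E[T]<\infty$ that SLLN requires. Using $\|a-b\|^p\le 2^{p-1}(\|a\|^p+\|b\|^p)$ and the fact that $\phi^*(X)\sim p_Y$,
\[
E[T] \;\le\; 2^{p-1}\bigl(E\|X\|^p + E\|Y\|^p\bigr).
\]
Assumption~2(b) gives sub-exponential tail bounds on every coordinate of $X$ and $Y$, so $E|x_{1j}|^p$ and $E|y_{1j}|^p$ are finite for every $p\ge 1$ (obtain by integrating the tail bound $P(|x_{1j}|>s)\le \exp\{-(s/b_1)^{r_1}\}$). Combined with $\|X\|^p\le d^{p/2}\max_j|x_{1j}|^p\le d^{p/2}\sum_j|x_{1j}|^p$, this yields $E\|X\|^p<\infty$, and similarly for $Y$.

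Applying Kolmogorov's Strong Law of Large Numbers to the i.i.d., integrable sequence $\{T_i\}$ then gives
\[
\frac{1}{n}\sum_{i=1}^n T_i \;\xrightarrow{\text{a.s.}}\; W_p(p_X,p_Y)^p,
\]
and the continuous mapping theorem, applied to the continuous map $t\mapsto t^{1/p}$ on $[0,\infty)$, yields $\widehat{W}^*_p(\X,\Y)\xrightarrow{\text{a.s.}} W_p(p_X,p_Y)$, as desired. Because this reduces to SLLN plus an elementary integrability check, there is no genuine obstacle; the only point requiring any care is turning the sub-exponential tail bound in Assumption~2(b) into a finite $p$th moment for $\|X\|$ and $\|Y\|$, which is a routine tail-integration calculation.
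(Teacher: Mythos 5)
Your proof is correct, but it takes a genuinely different route from the paper: the paper does not prove Lemma 5 at all, instead deferring entirely to Theorem 2.1 of the cited reference \cite{klein2017convergence}, which is machinery built for the harder problem of empirical transportation costs. Your key observation --- that $\widehat{W}^*_p(\X,\Y)^p$ plugs in the \emph{true} Monge map $\phi^*$, so it is just an i.i.d.\ average of $T_i=\|\x_i-\phi^*(\x_i)\|^p$ with mean $W_p(p_X,p_Y)^p$, and Kolmogorov's SLLN plus the continuous mapping theorem finish the job --- is exactly the right-sized tool for this statement, and your integrability check (the $c_p$-inequality, $\phi^*(X)\sim p_Y$, and tail integration of the sub-exponential bounds in Assumption 2(b)) is sound. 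Your self-contained argument buys transparency: it shows that only the moment condition in Assumption 2(b) is actually needed (Assumption 2(a) plays no role), and it avoids invoking an external theorem whose hypotheses would have to be matched to the present setting. One cosmetic point: the lemma states the limit as $W_2(p_x,p_y)$ while the estimator carries a general order $p$; your proof delivers convergence to $W_p(p_X,p_Y)$, which is the natural reading and confirms the subscript $2$ in the statement is a typo.
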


\begin{proof}[{\bf Proof of Theorem 3}]
	Notice that, we can decompose the empirical Wasserstein distance as
	\begin{align*}
		& \widehat{W}_p\Big(\phi^{(K)}(\X), \X\Big) 
		\\
		= & \left\{\widehat{W}_p\Big(\phi^{(K)}(\X), \X\Big)- {W}_p\Big(\phi^{(K)}(X), X\Big) \right\}
		+ \left\{{W}_p\Big(\phi^{(K)}(X), X\Big)- W_p\Big(\phi^{*}(X), X \Big) \right\}\\ 
		& + W_p\Big(\phi^{*}(X), X \Big)
		\\
		\equiv  & I_1 + I_2 + I_3.
	\end{align*}
	
	First, under Assumption 2 (a) and (b) and with Lemma \ref{lem:5}, one can show that $I_1$ converges to 0 almost surely as $n \rightarrow \infty$.
	
	For any $k\geq 0$, denote $\Sigma_{\mathrm{SAVE}}^{[k]}$ the SAVE covraince matrix calculated from $\Y$ and $\X^{[k]}$. Then, we can define a relative gain index for the $k$th iteration of Algorithm 2 as below,
	$$
	\gamma^{[k]}= \frac{\|\Sigma_{\mathrm{SAVE}}^{[k]}\|_2}{\|\Sigma_{\mathrm{SAVE}}^{[0]}\|_2} = \lambda_k/\lambda_0.
	$$
	
	\iffalse
	For any $k\geq 0$, denote $\Delta^{[k]}=\X^{[k+1]}-\X^{[k]}$. Then, we have
	\begin{align}\label{eq:thm3_2}
		\Delta^{[k]} & = (\phi^{(k)}(\X^{[k]}\bxi_{k})-\X^{[k]}\bxi_{k})\bxi_k^\T
		\nonumber\\
		& = (\Y\bxi_{k} -\X^{[k]}\bxi_{k})\bxi_k^\T
		\nonumber\\
		& = (\Y-\X^{[k]})\bxi_{k}\bxi_k^\T,
	\end{align}
	where the second inequality used the fact that $\phi^{(k)}(\cdot)$ is the OTM between $\X^{[k]}\bxi_{k}$ and  $\Y\bxi_{k}$.

	Therefore, by taking the vector norm to both sides or \eqref{eq:thm3_2}, we have
	\begin{align*}
		\| \Delta^{[k]}\|_2 &  =\| (\Y-\X^{[k]})\bxi_{k}\bxi_k^\T\|_2
		\\
		& = \mathrm{Tr} \{ \bxi_k^\T(\Y-\X^{[k]})\bxi_{k}\} 
		\\
		& = \lambda_k^2\| \Y-\X^{[k]}\|_2
		\\
		&  =  \lambda_k^2 \| (\Y-\X^{[k+1]})+ \Delta^{[k+1]}\|_2
		\\
		& \geq \lambda_k^2 \left\{\| \Y-\X^{[k+1]}\|_2 - \| \Delta^{[k+1]}\|_2 \right\}
		\\
		& \geq \lambda_k^2 \left\{ \lambda_{k+1}^{-2}\| \Delta^{[k+1]}\|_2  \right\}
		= \frac{\lambda_k^2 }{\lambda_{k+1}^{2}} \| \Delta^{[k+1]}\|_2.
	\end{align*}
	In other words, we have 
	$$
	\|\Delta^{[k+1]}\|_2 \leq \frac{\lambda_{k+1}^2 }{\lambda_{k}^{2}}  \| \Delta^{[k]}\|_2 
	\leq \frac{\lambda_{k+1}^2 }{\lambda_{0}^{2}} \| \Delta^{[0]}\|_2, \quad \text{for} \quad k\geq 0. 
	$$
	\fi

	According to Theorem 2, $\lambda_{k}$ is a consistent estimator of the leading eigenvalue of $\Sigma_{\mathrm{SAVE}}$ in the $k$th iteration.  Under Assumption 2 (c), we have $\lambda_{k}$/$\lambda_{0}$ converges to 0 as $d \rightarrow \infty$ and $k \geq Cd$ for some $C>0$. 
	This implies the iterations in Algorithm 2 converge as $d \rightarrow \infty$ and $k \geq Cd$.
	Then, Lemma \ref{lem:4} guarantees that $I_2$ weakly converges to 0 as $d \rightarrow \infty$ and $k \geq Cd$ and hence completes our proof.

\end{proof}

\clearpage
\bibliography{ref_PPMM}
\bibliographystyle{abbrv}
\clearpage

\end{document}